\newcommand{\high}[1]{{\color{red}{#1}}}
\newtheorem{theorem}{Theorem}
\newtheorem{lemma}{Lemma}
\DeclareMathOperator*{\argmax}{argmax}
\def\BibTeX{{\rm B\kern-.05em{\sc i\kern-.025em b}\kern-.08em
    T\kern-.1667em\lower.7ex\hbox{E}\kern-.125emX}}
\def\N{\mathcal{N}}
\def\PN{\mathcal{P}(\mathcal{N})}
\def\S{S}
\def\A{Z}
\begin{document}

\title{Combinatorial Sleeping Bandits with Fairness Constraints}

\author{
Fengjiao Li,
Jia Liu,
and Bo Ji 
\thanks{
This work was supported in part by the NSF under Grants CNS-1651947, CCF-1657162, ECCS-1818791, CCF-1758736, CNS-1758757, and CNS-1446582, the ONR under Grant N00014-17-1-2417, and the AFRL under Grant FA8750-18-1-0107. 
A preliminary version of this work without detailed proofs has been presented at IEEE INFOCOM 2019 \cite{li2019infocom}.

Fengjiao Li (fengjiao.li@temple.edu) and Bo Ji (boji@temple.edu) are with the Department of Computer and
Information Sciences, Temple University, Philadelphia, PA, USA. Jia Liu (jialiu@iastate.edu) is with the Department of Computer Science,
Iowa State University, Ames, IA, USA. Bo Ji is the corresponding author.}
}

\maketitle

\begin{abstract}
	The multi-armed bandit (MAB) model has been widely adopted for studying many practical optimization problems (network resource allocation, ad placement, crowdsourcing, etc.) with unknown parameters. The goal of the player (i.e., the decision maker) here is to maximize the cumulative reward in the face of uncertainty. However, the basic MAB model neglects several important factors of the system in many real-world applications, where multiple arms (i.e., actions) can be simultaneously played and an arm could sometimes be ``sleeping" (i.e., unavailable). Besides reward maximization, ensuring \emph{fairness} is also a key design concern in practice. To that end, we propose a new \emph{Combinatorial Sleeping MAB model with Fairness constraints}, called \emph{CSMAB-F}, aiming to address the aforementioned crucial modeling issues. The objective is now to maximize the reward while satisfying the fairness requirement of a minimum selection fraction for each individual arm. To tackle this new problem, we extend an online learning algorithm, called \emph{Upper Confidence Bound (UCB)}, to deal with a critical tradeoff between \emph{exploitation} and \emph{exploration} and employ the virtual queue technique to properly handle the fairness constraints. By carefully integrating these two techniques, we develop a new algorithm, called \emph{Learning with Fairness Guarantee (LFG)}, for the CSMAB-F problem. Further, we rigorously prove that not only LFG is \emph{feasibility-optimal}, but it also has a time-average \emph{regret} upper bounded by $\frac{N}{2 \eta} + \frac{\beta_1 \sqrt{m N T \log{T}}+ \beta_2 N}{T}$, where $N$ is the total number of arms, $m$ is the maximum number of arms that can be simultaneously played, $T$ is the time horizon, $\beta_1$ and $\beta_2$ are constants, and $\eta$ is a design parameter that we can tune. Finally, we perform extensive simulations to corroborate the effectiveness of the proposed algorithm. Interestingly, the simulation results reveal an important tradeoff between the regret and the speed of convergence to a point satisfying the fairness constraints. 

\end{abstract}


\section{Introduction}
The \emph{multi-armed bandit (MAB)} model has been widely adopted for studying many practical optimization problems (network resource allocation, ad placement, crowdsourcing, etc.) with unknown parameters (see, e.g., \cite{bubeck2012regret}). In the basic stochastic MAB setting, there are $N$ arms (i.e., actions), each of which, if played, returns a random reward to the player (i.e., the decision maker). The random reward of each arm takes values in $[0,1]$ and is assumed to be \emph{independent and identically distributed (i.i.d.)} over time. However, the reward distributions and the mean rewards are unknown \emph{a priori}. The player decides which single arm to play in each round for a given time horizon of $T$ rounds, with a goal of maximizing the cumulative reward in the face of unknown mean rewards.

However, this basic MAB model neglects several important factors of the system in many real-world applications, where multiple actions can be simultaneously taken and an action could sometimes be ``sleeping" (i.e., unavailable). 
Take wireless scheduling for example: multiple clients compete for a shared wireless channel to transmit packets to a common access point (AP). The AP decides which client(s) can transmit at what times. A successfully delivered packet will generate a random reward, which could represent the value of the information contained in the packet. In each scheduling cycle, multiple clients could be scheduled for simultaneous transmissions as the channel can typically be divided into multiple ``sub-channels" using multiplexing technologies \cite{rappaport2001}. On the other hand, some clients may be unable to transmit packets when experiencing a poor channel condition (due to fading or mobility). Furthermore, in addition to maximizing the reward, ensuring \emph{fairness} among the clients or providing \emph{Quality of Service (QoS)} guarantees to the clients is also a key design concern in wireless scheduling \cite{liu2003framework,hou2009theory}, as well as in network resource allocation in general \cite{5461911}. These important factors (i.e., combinatorial actions, availability of actions, and fairness) are commonly shared by many other applications too (see more detailed discussions in Section~\ref{sec:applications}). However, it remains largely unexplored in the literature to carefully integrate all these factors into a unified MAB model.

To that end, in this paper we propose a new \emph{Combinatorial Sleeping MAB model with Fairness constraints}, called \emph{CSMAB-F}, aiming to address the aforementioned modeling issues, which are practically important for a wide variety of applications. Compared to the basic MAB setting, in the proposed framework the set of available arms follows a certain distribution that is assumed to be \emph{i.i.d.} over time and is unknown \emph{a priori}. However, the information of available arms will be revealed at the beginning of each round. The player can then play multiple, but no more than $m$, available arms and receives a compound reward being the weighted sum of the rewards of the played arms. We also impose fairness constraints that the player must ensure a (possibly different) minimum selection fraction for each individual arm. The goal is now to maximize the reward while satisfying the fairness requirement. 
We summarize our main contributions as follows.

First, to the best of our knowledge, \emph{this is the first work that integrates all three critical factors of combinatorial arms, availability of arms, and fairness into a unified MAB model}. The proposed CSMAB-F framework successfully addresses these crucial modeling issues. This new problem, however, becomes much more challenging. In particular, integrating fairness constraints adds a new layer of difficulty to the combinatorial sleeping MAB problem that is already quite challenging. This is because not only the player encounters a \emph{fundamental tradeoff} between \emph{exploitation} (i.e., staying with the currently-known best option) and \emph{exploration} (i.e., seeking better options) when attempting to maximize the reward, but she is also faced with a \emph{new dilemma}: how to manage the balance between maximizing the reward and satisfying the fairness requirement? Several well-known MAB algorithms can successfully handle the exploitation-exploration tradeoff, but none of them was born with fairness constraints in mind.

To address this new challenge, we extend an online learning algorithm, called \emph{Upper Confidence Bound (UCB)}, to deal with the exploitation-exploration tradeoff and employ the \emph{virtual queue technique} to properly handle the fairness constraints. By carefully integrating these two techniques, we develop a new algorithm, called \emph{Learning with Fairness Guarantee (LFG)}, for the CSMAB-F problem. Further, we rigorously prove that not only LFG is \emph{feasibility-optimal}, but it also has a time-average \emph{regret} (i.e., the reward difference between an optimal algorithm that has \emph{a priori} knowledge of the mean rewards and the considered algorithm) upper bounded by $\frac{N}{2 \eta} + \frac{\beta_1 \sqrt{m N T \log{T}}+ \beta_2 N}{T}$, where $\beta_1$ and $\beta_2$ are constants and $\eta$ is a design parameter that we can tune. Note that our regret analysis is more challenging as the traditional regret analysis becomes non-applicable here due to the integration of virtual queues for handling the fairness constraints.

Finally, we conduct extensive simulations to elucidate the effectiveness of the proposed algorithm. From the simulation results, we observe that LFG can effectively meet the fairness requirement while achieving a good regret performance. Interestingly, the simulation results also reveal a critical tradeoff between the regret and the speed of convergence to a point satisfying the fairness constraints. We can control and optimize this tradeoff by tuning the value of parameter $\eta$. 


The rest of the paper is organized as follows. 
We first discuss related work and describe the proposed CSMAB-F framework in Sections~\ref{sec:relatedwork} and \ref{sec:system}, respectively. Then, we develop the LFG algorithm for the CSMAB-F problem in Section~\ref{sec:approach}, followed by the performance analysis in Section~\ref{sec:results}. Detailed discussions about several real-world applications are provided in Section~\ref{sec:applications}. Finally, we present simulation results in Section~\ref{sec:simulation} and make concluding remarks in Section~\ref{sec:conclusion}.    

\section{Related Work} \label{sec:relatedwork}
Starting with the seminal work of \cite{lai1985asymptotically}, the MAB problems have been extensively studied in a large body of work (see, e.g., \cite{gittins2011multi,bubeck2012regret}). 
In the basic MAB setting, the authors of \cite{lai1985asymptotically} establish a fundamental logarithmic lower bound on the regret of a class of ``uniformly good policies" and propose UCB policies that asymptotically achieve the lower bound. Further, the work of \cite{auer2002finite} shows that logarithmic regret can be achieved uniformly over time rather than asymptotically by simpler sample-mean-based UCB policies and an $\epsilon_t$-greedy policy. 

Following this line of research, different MAB variants have been proposed to model several important factors of the system in real-world applications. The ones that are relevant to ours include combinatorial MAB (CMAB) where multiple arms form a super arm and can be simultaneously played \cite{anantharam1987asymptotically,gai2012combinatorial,kvetonmatroid,chen2013combinatorial,chen2016combinatorial,combes2015combinatorial} and sleeping MAB (SMAB) where an arm could sometimes be ``sleeping" (i.e., unavailable)\cite{kleinberg2010regret,kanade2009sleeping,kanade2014learning,chatterjee2017analysis}.
Being the first to study the CMAB problem, the work of \cite{anantharam1987asymptotically} considers combinations of a fixed number of simultaneous plays. This simple combinatorial structure has been generalized to permutations \cite{gai2012combinatorial} and matroids \cite{kvetonmatroid}.
The work of \cite{chen2013combinatorial,chen2016combinatorial} generalizes linear reward functions considered in \cite{anantharam1987asymptotically,gai2012combinatorial,kvetonmatroid} to include a large class of linear and nonlinear rewards.
In \cite{combes2015combinatorial}, the authors prove a tight problem-specific lower bound for stochastic CMAB (where the reward of each played arm rather than the combinatorial reward is revealed) and propose an efficient sampling algorithm with an improved multiplicative factor.
The work of \cite{kleinberg2010regret} is among the first to study the SMAB problem. This work provides a computationally efficient algorithm for the setting of stochastic rewards while allowing both stochastic and adversarial availability. Follow-up work of \cite{kanade2009sleeping,kanade2014learning} studies the setting of adversarial rewards while the availability of arms is either stochastic or adversarial. Very recently, the authors of \cite{chatterjee2017analysis} analyze the performance of Thompson Sampling for the SMAB problem and show that it empirically performs better than other algorithms.
Another recent study in \cite{chen2018contextual} considers combinatorial sleeping MAB with submodular reward functions in the contextual bandit setting. This work develops a solution based on a well-known greedy algorithm for submodular maximization and prove that it can achieve a sublinear regret, which is in comparison to the greedy algorithm in the setting with known rewards.

%

MAB settings with constraints have also been considered in prior studies. Most of them focus on bandits with budgets (see, e.g., \cite{combes2015bandits}) or bandits with knapsacks (see, e.g., \cite{badanidiyuru2018bandits}), where no more plays can be made if the budget/knapsack constraints are violated. Hence, these types of constraints are very different from the \emph{long-term} fairness constraints we consider in this paper.
Some very recent work considers multi-type rewards \cite{denardo2013multi} and multi-level rewards \cite{cai2018learning,chen2018beyond}. They introduce a minimum guarantee requirement that the total reward of some type/level must be no smaller than a given threshold. However, these studies differ significantly from ours in the following key aspects. First, and most importantly, their constraints do not model fairness among arms. The required minimum guarantee is for the total rewards (of some type/level) rather than for each individual arm. Second, no learning algorithm is proposed in \cite{denardo2013multi}; the proposed learning algorithms in \cite{cai2018learning,chen2018beyond} may violate the constraints, although they show provable violation bounds. Third, they assume that all the arms are available at all times. Last but not least, the proof techniques for regret analysis in \cite{cai2018learning,chen2018beyond} are very different from ours.

Fairness in online learning has been studied in \cite{joseph2016fairness, joseph2016fair}. A key idea of their proposed fair algorithm is that two arms should be played with equal probability until they can be distinguished with a high confidence. Another work \cite{talebi2018learning} studies how to learn proportionally fair allocations by considering the maximization of a logarithmic utility function. These studies are less relevant to our work, although they share some high-level similarities with ours in modeling fairness.

At a technical level, the work of \cite{hsu2018integrate} that integrates learning and queueing is most related to ours. We follow a similar line of regret analysis in \cite{hsu2018integrate} for deriving the upper bound. However, they do not explicitly model fairness constraints, nor do they consider the availability of arms.

We notice that since the publication of our conference version \cite{li2019infocom},
the work of \cite{patil2019achieving} follows our model with a stronger fairness notion and proposes algorithms that can achieve an improved accumulative regret that is logarithmic. However, their proposed algorithms either are $T$-aware (i.e., assuming the knowledge of the length of the time horizon, $T$) or provide fairness guarantees for some special cases only, where the minimum selection fraction for every individual arm should be less than $1/k$. 

\section{System model and problem formulation} \label{sec:system}

In this section, we describe the detailed setting of our proposed CSMAB-F framework. 
Let $\N= \{1, 2, \dots, N\}$ denote the set of $N$ arms. Each arm $i \in \N$ is associated with a reward $X_i(t)$ in round $t$, where $t=0, 1,2,\dots$. The reward is a random variable on $[0,1]$ and follows a certain distribution with mean $\mu_i$. We assume that the reward for each arm is \emph{i.i.d.} over time. The mean reward vector $\bm{\mu}=(\mu_1,\dots,\mu_N)$ is unknown \emph{a priori}. 
In our setting, an arm could sometimes be ``sleeping" (i.e., unavailable).
Let $A(t) \in \PN$ denote the set of available arms in round $t$, where $\PN$ is the power set of $\N$.
We use $P_\mathbf{A}(\A) \triangleq P(A(t)=\A)$, where $\A \in \PN$, to denote the distribution of available arms, which is assumed to be \emph{i.i.d.} over time. This distribution is unknown \emph{a priori}, but the set of available arms $A(t)$ will be revealed to the player at the beginning of each round $t$. 

In each round, the player is allowed to play multiple, but no more than $m$, available arms (i.e., arms belonging to $A(t)$). 
Each subset of available arms is also called a \emph{super arm} \cite{chen2013combinatorial}. We restrict the size of a chosen super arm to be no larger than $m$ so as to account for resource constraints (see discussions on applications in Section \ref{sec:applications}). Let $\mathcal{S}(\A)$ represent the set of all feasible super arms when the set of available arms $\A$ is observed, i.e., $\mathcal{S}(\A) \triangleq \{\S \subseteq \A: |\S|\leq m \},$ where $|\S|$ denotes the cardinality of set $\S$. In round $t$, a player selects a super arm $S(t) \in \mathcal{S}(A(t))$ and receives a compound reward $R(t)$, which is a weighted sum of the rewards of the played arms, i.e., $R(t) \triangleq \sum_{i\in S(t)}w_iX_i(t)$, where $w_i$ is the weight of arm $i$. We assume that the weights $w_i$ are fixed positive numbers known \emph{a priori} and are upper bounded by a finite constant $w_{\max}>0$. The goal of the player is to maximize the expected time-average reward for a given time horizon of $T$ rounds, i.e., $\mathbb{E} [ \frac{1}{T} \sum_{t=0}^{T-1} R(t)]$.

To describe the action for each individual arm, we use a binary vector $\mathbf{d}(t) = (d_1(t), \dots, d_N(t))$ to indicate whether each arm is played or not in round $t$, where $d_i(t)=1$ if arm $i$ is played, i.e., $i \in S(t)$; otherwise, $d_i(t)=0$.
Then, the action vector $\mathbf{d}(t)$ must satisfy $\sum_{i=1}^N d_i(t) \leq m$ for all $t \ge 0$.

As we discussed in the introduction, in addition to maximize the reward, ensuring fairness among the arms is also a key design concern for many real-world applications. To model the fairness requirement, we introduce the following constraints on a minimum selection fraction for each individual arm:
\begin{equation}
\liminf_{T \to \infty} \frac{1}{T} \sum_{t=0}^{T-1}\mathbb{E}[d_i(t)]\geq r_i  \, \, \forall i \in \N,\label{eq:fraction requirement}
\end{equation}
where $r_i \in (0,1)$ is the required minimum fraction of rounds in which arm $i$ is played.
The minimum selection fraction vector $\mathbf{r}=(r_1, \dots, r_N)$ is said to be \emph{feasible} if there exists a policy that makes a sequence of decisions $S(t)$ for $t \ge 0$ such that \eqref{eq:fraction requirement} is satisfied.
Then, the \emph{maximal feasibility region} $\mathcal{C}$ is defined as the set of all such feasible vectors $\mathbf{r}\in (0,1)^N$. A policy is said to be \emph{feasibility-optimal} if it can support any vector $\mathbf{r}$ (i.e.,  \eqref{eq:fraction requirement} is satisfied) strictly inside the maximal feasibility region $\mathcal{C}$.

We now consider the special class of stationary and randomized policies called \emph{$A$-only policies}. An $A$-only policy observes the set of available arms $A(t)$ for each round $t$ and independently chooses a super arm $S(t) \in \mathcal{S}(A(t))$ as a (possibly randomized) function of the observed $A(t)$ only. An $A$-only policy $\alpha$ is characterized by a group of probability distributions, denoted by $\mathbf{q} = [q_{\S}(\A), \forall \S \in \mathcal{S}(\A), \forall \A \in \PN]$, where $q_{\S}(\A)$ is the probability that policy $\alpha$ chooses super arm $\S \in \mathcal{S}(\A)$ when observing the set of available arms $\A \in \PN$, and $\sum_{\S \in \mathcal{S}(\A)} q_{\S}(\A) = 1$ for all $\A \in \PN$.
Then, under policy $\alpha$, the action $d_i^{\alpha}(t)$ is \emph{i.i.d.} over time with the following mean:
\begin{equation}
\mathbb{E}[d_i^{\alpha}(t)] = \sum_{\A \in \PN } P_{\mathbf{A}}(\A) \sum_{\S \in \mathcal{S}(\A): i \in \S} q_{\S}(\A), \label{eq:optimal_action_expectation}
\end{equation}
for every arm $i \in \N$ and for all $t \ge 0$, and thus, constraint \eqref{eq:fraction requirement} is equivalent to $\mathbb{E}[d_i^{\alpha}(t)] \geq r_i$ for every arm $i \in \N$.
Further, we have the following lemma.

\begin{lemma}
If a vector $\mathbf{r}$ is strictly inside the maximal feasibility region $\mathcal{C}$, then there exists an $A$-only policy that can support vector $\mathbf{r}$. \label{lem:Lemma_randomized_existed}
\end{lemma}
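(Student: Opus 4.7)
The plan is to construct the desired $A$-only policy by time-averaging the per-round conditional behavior of an arbitrary policy that supports $\mathbf{r}$, and then extracting a convergent subsequence by a compactness argument.

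First, since $\mathbf{r}$ is strictly inside $\mathcal{C}\subseteq\mathcal{C}$, by definition there exists a (possibly history-dependent and non-stationary) policy $\pi$ whose decisions $d_i^\pi(t)$ satisfy \eqref{eq:fraction requirement}. Let $S^\pi(t)\in\mathcal{S}(A(t))$ be the super arm selected by $\pi$ at round $t$. For every $\mathcal{A}\in\mathcal{P}(\mathcal{N})$ with $P_\mathbf{A}(\mathcal{A})>0$, every $S\in\mathcal{S}(\mathcal{A})$, and every horizon $T$, I would define the time-averaged conditional choice probability
$$\bar q_S^{\,T}(\mathcal{A})\triangleq\frac{1}{T\,P_\mathbf{A}(\mathcal{A})}\sum_{t=0}^{T-1}P\bigl(S^\pi(t)=S,\;A(t)=\mathcal{A}\bigr).$$
The i.i.d.\ assumption on $A(t)$ is what makes the denominator $T\,P_\mathbf{A}(\mathcal{A})$ exactly, which is essential for the resulting quantities to be probabilities. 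For each fixed $\mathcal{A}$, $\bigl(\bar q_S^{\,T}(\mathcal{A})\bigr)_{S\in\mathcal{S}(\mathcal{A})}$ lies in the simplex $\Delta(\mathcal{S}(\mathcal{A}))$; for any $\mathcal{A}$ with $P_\mathbf{A}(\mathcal{A})=0$ I would assign an arbitrary distribution, since such patterns never occur and do not affect \eqref{eq:optimal_action_expectation}.

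Second, since $\mathcal{P}(\mathcal{N})$ is finite and each simplex is compact, the product space $\prod_\mathcal{A}\Delta(\mathcal{S}(\mathcal{A}))$ is also compact. Hence there is a subsequence $T_k\to\infty$ along which $\bar q_S^{\,T_k}(\mathcal{A})\to q_S(\mathcal{A})$ for every $S$ and every $\mathcal{A}$ simultaneously. These limits $\mathbf{q}=[q_S(\mathcal{A})]$ define an $A$-only policy $\alpha$. To verify that $\alpha$ supports $\mathbf{r}$, I would plug $\mathbf{q}$ into \eqref{eq:optimal_action_expectation}, interchange the (finite) sums with the limit in $k$, and collapse the inner double sum using the observation that $\{i\in S^\pi(t),\,A(t)=\mathcal{A}\}$ partitions the event $\{d_i^\pi(t)=1\}$ over $\mathcal{A}$ and $S\ni i$, obtaining
$$\mathbb{E}[d_i^\alpha(t)]=\lim_{k\to\infty}\frac{1}{T_k}\sum_{t=0}^{T_k-1}\mathbb{E}[d_i^\pi(t)]\;\geq\;\liminf_{T\to\infty}\frac{1}{T}\sum_{t=0}^{T-1}\mathbb{E}[d_i^\pi(t)]\;\geq\;r_i,$$
for every $i\in\mathcal{N}$, which by the remark preceding the lemma is exactly \eqref{eq:fraction requirement} for $\alpha$.

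The main obstacle is producing a \emph{single} $A$-only policy $\alpha$ that simultaneously satisfies all $N$ fractional constraints, rather than one policy per arm; this is handled cleanly by the joint compactness of the product simplex, which yields one subsequence that works for all coordinates at once. The only other technical point is the legitimacy of interchanging the subsequential limit with the finite sums over $\mathcal{A}$ and $S$, which is immediate. Everything else follows from the i.i.d.\ availability assumption, which is what identifies empirical joint averages with products of $P_\mathbf{A}(\mathcal{A})$ and a conditional distribution, and that decomposition is exactly the structural form of an $A$-only policy required by \eqref{eq:optimal_action_expectation}.
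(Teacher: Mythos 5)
Your proof is correct and is essentially the argument the paper relies on: the paper omits the proof and cites the standard result (Theorem 4.5 of Neely's book), whose proof is exactly this time-averaging of conditional choice probabilities followed by extraction of a convergent subsequence via compactness of the (finite) product of simplices. The only cosmetic difference is that your argument never uses strictness of $\mathbf{r}$ inside $\mathcal{C}$ and thus proves the slightly stronger statement for any feasible $\mathbf{r}$, which is harmless.
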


\begin{proof}
The proof is omitted as it is quite standard and follows a similar line of analysis in the proof of Theorem 4.5 in \cite{neely2010stochastic} (see \cite[pp.~92-95]{neely2010stochastic}).
\end{proof}

Lemma~\ref{lem:Lemma_randomized_existed} implies that there exists an optimal $A$-only policy. 
Hence, assuming that the mean reward vector $\bm{\mu}$ is known in advance, one can formulate the reward maximization problem with minimum selection fraction constraint as the following linear program (LP): 
\begin{maxi!}[2]
	{\substack{\mathbf{q}}} {\sum_{\A \in \PN } P_{\mathbf{A}}(\A) \sum_{\S \in \mathcal{S}(\A)} q_{\S}(\A)\sum_{i \in \S}w_i\mu_i \label{optimal objective}}
	{\label{LP}}{}
	\addConstraint{\sum_{\A \in \PN } P_{\mathbf{A}}(\A)\sum_{\S \in \mathcal{S}(\A): i \in \S} q_{\S}(\A)\geq r_i,}{\forall i \in \N, \label{optimal constraint1}}
	\addConstraint{\sum_{\S \in \mathcal{S}(\A)} q_{\S}(\A)= 1,}{\forall \A \in \PN, \label{optimal constraint2}}
	\addConstraint{q_{\S}(\A) \in [0,1],}{ \forall \S \in \mathcal{S}(\A), \forall \A \in \PN. \label{optimal constraint3}}
\end{maxi!}

Suppose that an optimal solution to the above LP is $\mathbf{q^*}=[q^*_{\S}(\A), \forall \S \in \mathcal{S}(\A), \forall \A \in \PN]$. Then an optimal $A$-only policy $\alpha^*$ characterized by $\mathbf{q^*}$ obtains the maximum reward:
\begin{equation}
\label{optimal}
R^* \triangleq \sum_{\A \in \PN } P_{\mathbf{A}}(\A) \sum_{\S \in \mathcal{S}(\A)} q^*_{\S}(\A)\sum_{i \in \S}w_i\mu_i. 
\end{equation}
However, the mean reward vector $\bm{\mu}$ is unknown to the player in advance. Hence, the player not only needs to maximize the reward based on the estimated mean rewards (i.e., exploitation), but she also has to simultaneously learn to obtain a more accurate estimate of the mean rewards (i.e., exploration). Such a learning process typically incurs a loss in the obtained reward, which is called the \emph{regret}. Formally, the time-average regret of a policy $\pi$ for a time horizon of $T$ rounds, denoted by $R_{\pi}(T)$, is defined as the difference between the maximum reward $R^*$ and the expected time-average reward obtained under policy $\pi$ that chooses super arm $S(t)$ in round $t$, i.e.,
\begin{equation}
\begin{aligned}
R_{\pi}(T) &\triangleq R^*- \mathbb{E} \left [ \frac{1}{T} \sum_{t=0}^{T-1} \sum_{i \in S(t)} w_iX_i(t) \right]. 
\label{eq:regret} 
\end{aligned}
\end{equation}
Note that minimizing the regret is equivalent to maximizing the reward. Hence, the regret is a commonly used metric in the MAB literature for measuring the performance of learning algorithms. In this paper, we will adopt the time-average regret defined in \eqref{eq:regret} as the main performance metric.

The key notations used in this paper are listed in Table \ref{notations}.

\section{The LFG Algorithm} \label{sec:approach}


In this section, by carefully integrating the key ideas of UCB \cite{lai1985asymptotically,auer2002finite} and the virtual queue technique \cite{neely2010stochastic}, we develop a new algorithm, called \emph{Learning with Fairness Guarantee (LFG)}, to tackle the CSMAB-F problem. While UCB is extended to deal with the exploitation-exploration tradeoff, the virtual queue technique is employed to handle the fairness constraints.


There are two main challenges in designing an efficient algorithm for the CSMAB-F problem: (i) how to maximize the reward in the face of unknown mean rewards and (ii) how to satisfy the fairness constraints. Note that these two challenges cannot be addressed separately as they are tightly coupled together. Therefore, we need a holistic approach to manage the balance between maximizing the reward and satisfying the fairness constraints. 
In what follows, we will first discuss the key ideas for addressing each individual challenge and then propose the LFG algorithm by carefully integrating them.

The key of maximizing the reward with uncertainty is to strike a balance between exploitation (i.e., choosing the option that gave highest rewards in the past) and exploration (i.e., seeking new options that might give higher rewards in the future). We extend a simple UCB policy based on the concept of optimism in the face of uncertainty to address this challenge and describe the details as follows.

Let $h_i(t)$ be the number of times arm $i$ has been played by the end of round $t$, i.e., $h_i(t) \triangleq \sum_{k=0}^t d_i(k)$.
We set $h_i(-1)=0$ as the system begins at $t=0$.
Also, let $\hat{\mu}_i(t)$ be the sample mean of the observed rewards of arm $i$ by the end of round $t$, i.e., $\hat{\mu}_i(t) \triangleq \frac{\sum_{k=0}^t X_i(k) d_i(k)}{h_i(t)}$. 
We set $\hat{\mu}_i(t)=1$ if arm $i$ has not been played yet by the end of round $t$ (i.e., if $h_i(t)=0$).
We use $\bar{\mu}_i(t)$ to denote the UCB estimate of arm $i$ in round $t$, which is given as follows:
\begin{equation}
\bar{\mu}_i(t) \triangleq
\min \displaystyle{\left\{\hat{\mu}_i(t-1) + \sqrt{\frac{3\log t}{2h_i(t-1)}}, 1 \right\}}, \label{UCB estimation}
\end{equation}
where $\hat{\mu}_i(t-1)$ and $\sqrt{\frac{3\log t}{2h_i(t-1)}}$ correspond to exploitation and exploration, respectively.
We use the above truncated version of the UCB estimate (i.e., capped at 1) as the actual reward must be in $[0,1]$. Similarly, we set $\bar{\mu}_i(t)=1$ if $h_i(t-1)=0$.

\begin{table}[t]
	\centering
	\caption{Summary of Key Notations}
	\label{notations}
	\begin{tabular}{p{0.11\columnwidth}|p{0.78\columnwidth}}
		\hline	
		\textbf{Notations}						&    \textbf{Meaning}			\\
		\hline
		$\N$; $N$ 				& Set of arms; number of arms		\\
		$\PN$				& Power set of $\N$ \\
		$T$						& Time horizon  				\\		
		$m$							& Maximum number of simultaneously played arms \\
		$\mu_i$						& Mean reward of arm $i$ \\
		$w_i$						& Weight of arm $i$ \\
		$r_i$						& Required minimum selection fraction for arm $i$	\\		
		$X_i(t)$					& Reward of arm $i$ in round $t$\\
		$\hat{\mu}_i(t)$			& Sample mean of the observed reward of arm $i$ up to round $t$\\
		$\bar{\mu}_i(t)$ 	& UCB estimate of arm $i$ in round $t$ \\	
		$h_i(t)$				& Number of times arm $i$ has been played up to round $t$ \\
		$d_i(t)$					& Indicator of whether arm $i$ is played or not in round $t$ \\
		$Q_i(t)$					& Virtual queue length for arm $i$ in round $t$\\							
		$A(t)$						& Set of available arms in round $t$ \\
		$S(t)$						& Super arm played in round $t$ \\		
		$P_{\mathbf{A}}(\A)$ 		& Probability that the set of available arms is $\A$ \\
		$\mathcal{S}(\A)$			& Set of feasible super arms when observing available arms $\A$ \\
		$q_{\S}(\A)$					& Probability that an $A$-only policy $\alpha$ chooses super arm $\S$ when observing available arms $\A$ \\
		$\mathcal{C}$	& Maximal feasibility region \\
		$R^*$					& Maximum reward with \emph{a priori} knowledge of $\bm{\mu}$ \\		
		$R_{\pi}(T)$			& Time-average regret of policy $\pi$ \\
		\hline
	\end{tabular}
\end{table}

In the basic MAB setting, the classic UCB policy simply selects the arm that has the largest UCB estimate in each round \cite{lai1985asymptotically,auer2002finite}.
However, in the CSMAB-F setting we are faced with several new challenges introduced by combinatorial arms, availability of arms, and fairness constraints. 
In particular, integrating fairness constraints adds a new layer of difficulty to the combinatorial sleeping MAB problem that is already quite challenging. This is because not only the player is faced with the exploitation-exploration dilemma when attempting to maximize the reward, but she also encounters a new tradeoff between maximizing the reward and satisfying the fairness requirement. Therefore, directly applying the UCB policy will not work as it was designed without fairness constraints in mind. 
Next, we will explain how to use the virtual queue technique to properly handle the fairness constraints, as well as how to cohesively integrate it with UCB to address the overall challenge of the CSMAB-F problem. 

Following the framework developed in \cite{neely2010stochastic}, we create a virtual queue $Q_i$ for each arm $i$ to handle the fairness constraints in \eqref{eq:fraction requirement}. By slightly abusing the notation, we also use $Q_i(t)$ to denote the queue length of $Q_i$ at the beginning of round $t$, which is a counter that keeps track of the ``debt" to arm $i$ up to round $t$. Specifically, the virtual queue length $Q_i(t)$ evolves according to the following dynamics:
\begin{equation}
Q_i(t) = [Q_i(t-1) + r_i - d_i(t-1)]^+ ,  \label{queue}
\end{equation}
where $[x]^+ \triangleq \max\{x, 0\}$. 
We set $Q_i(0)=0$ as the system begins at $t=0$.
As can be seen in the above queue-length evolution, the ``debt" to arm $i$ increases by $r_i$ in each round as $r_i$ is the minimum selection fraction, and it decreases by one if arm $i$ is selected in round $t-1$ (i.e., $d_i(t-1)=1$).

Having introduced the UCB estimate and the virtual queues, we are now ready to describe the proposed LFG algorithm, which is presented in Algorithm~\ref{designed_algorithm}. 
At the very beginning, we initialize $h_i(-1)=0$ and $Q_i(0)=0$ for all arms $i \in \N$~(lines~\ref{alg:init start}-\ref{alg:init end}).
In each round $t$, we first update the UCB estimates $\bar{\mu}_i(t)$ and the virtual queue lengths $Q_i(t)$ according to \eqref{UCB estimation} and \eqref{queue} for all arms $i \in \N$, respectively, based on the decision and the feedback from the previous rounds~(lines~\ref{alg:update mu Q start}-\ref{alg:update mu Q end}); we set $\bar{\mu}_i(t)=1$ if $h_i(t-1)=0$.
Then, we observe the set of available arms $A(t)$~(line~\ref{alg:observe available}) and select a super arm $S(t) \in \mathcal{S}(A(t))$ that maximizes the compound value of the updated $\bar{\mu}_i(t)$ and $Q_i(t)$ as follows~(line~\ref{alg:select}): 
\begin{equation}
S(t) \in \argmax_{\S \in \mathcal{S}(A(t))} \sum_{i \in \S} \left(Q_i(t)+ \eta w_i \bar{\mu}_i(t)\right), 
\label{best super arm}
\end{equation}
where $\eta$ is a positive parameter we can tune to manage the balance between the reward and the virtual queue lengths.
Note that the size of $\mathcal{S}(A(t))$ is exponential in $m$. Hence, the complexity of selecting a super arm $S(t)$ according to \eqref{best super arm} could be prohibitively high in general. However, thanks to the special structure of linear compound reward, we can efficiently solve \eqref{best super arm} and find a best super arm $S(t)$ by iteratively selecting best individual arms. Specifically, we select a super arm $S(t)$ consisting of the top-$m^*$ arms in $A(t)$, where $m^* \triangleq \min\{m, |A(t)|\}$. That is, starting with an empty $S(t)$, we iteratively select arm $i^*$ such that
\begin{equation}
i^* \in \argmax_{i \in A(t) \setminus S(t)} Q_i(t)+ \eta w_i \bar{\mu}_i, \label{best arm}
\end{equation}
and after each iteration, we update super arm $S(t)$ by adding arm $i^*$ to it, i.e., $S(t) = S(t) \cup \{i^*\}$. 
Repeating the above procedure for $m^*$ iterations solves \eqref{best super arm} and finds a best super arm $S(t)$.
After we play arms in $S(t)$ and set vector $\mathbf{d}(t)$ accordingly~(line~\ref{alg:play}), we observe the reward $X_i(t)$ for all played arms $i \in S(t)$~(lines~\ref{alg:reward start}-\ref{alg:reward end}) and update $h_i(t)$ and $\hat{\mu}_i(t)$ accordingly for all arms $i \in \N$~(lines~\ref{alg:update start}-\ref{alg:update end}).



\emph{Remark}: 
As we mentioned earlier, we introduce a design parameter $\eta$ to manage the balance between the reward and virtual queue lengths.
When $\eta$ is large, the LFG algorithm gives a higher priority to maximizing the reward compared to meeting the fairness constraints. This is because an arm with a large estimated reward (i.e., UCB estimate) will be favored, compared to another arm that has a small estimated reward but a large ``debt" (i.e., virtual queue length).
In contrast, when $\eta$ is small, the LFG algorithm gives a higher priority to meeting the fairness constraints because an arm with a large virtual queue length will be favored even if it has a small estimated reward. Indeed, our simulation results presented in Section~\ref{sec:simulation} reveal an interesting tradeoff between the regret and the speed of convergence to a point satisfying the fairness constraints.

Note that the LFG algorithm adopts a linear combination of the virtual queue length and the UCB estimate to address the trade-off between reward maximization and fairness guarantee. The reason that such a natural integration works is partially due to the linearity of the offline problem (i.e., Eq.~\eqref{LP}). In particular, the objective function (i.e., Eq.~\eqref{optimal objective}) is linear because we consider a linear reward function. In the settings with more general nonlinear reward functions, such as a submodular reward function, even the offline problem with known rewards could easily become intractable (e.g., NP-hard) \cite{krause14survey}. In such cases, it remains unclear how to design efficient algorithms that can achieve a good regret performance while satisfying the fairness constraints. We leave this question to our future work.

In addition, our proposed LFG algorithm is based on the drift-plus-penalty approach \cite{neely2010stochastic}. As explained in \cite{neely2010stochastic}, this approach can be viewed as a dual-based approach to the stochastic optimization problem (i.e., the linear program formulated in Eq. \eqref{LP}), and it reduces to the well-known dual subgradient algorithm for linear and convex programs when applied to non-stochastic optimization problems. However, to the best of our knowledge, our work is the first to employ the drift-plus-penalty approach to solve a new MAB problem with fairness constraints. The integration of the virtual queue technique and the UCB algorithm renders the regret analysis more challenging as the traditional regret analysis for the UCB algorithm becomes inapplicable here.

\begin{algorithm}[!t]  
	\caption{Learning with Fairness Guarantee (LFG)}  \label{designed_algorithm}
	\begin{algorithmic}[1]
	    \FOR{$i \in \N$} \label{alg:init start}
	    \STATE Initialize $h_i(-1)=0$ and $Q_i(0)=0$;
	    \ENDFOR \\ \label{alg:init end}
		In each round $t$: 
		\FOR{$i \in \N$}  \label{alg:update mu Q start}
		\IF{$h_i(t-1)>0$}
		\STATE Update $\bar{\mu}_i(t)$ according to \eqref{UCB estimation};
		\ELSE
		\STATE Set $\bar{\mu}_i(t)=1$;
		\ENDIF
		\STATE Update $Q_i(t)$ according to \eqref{queue};
		\ENDFOR  \label{alg:update mu Q end}
		\STATE Observe the set of available arms $A(t)$; \label{alg:observe available}
		\STATE Select super arm $S(t)$ according to \eqref{best super arm}; \label{alg:select}
		\STATE Play arms in $S(t)$ and set vector $\mathbf{d}(t)$ accordingly; \label{alg:play}
        \FOR{$i \in S(t)$} \label{alg:reward start}
		\STATE Observe the reward $X_i(t)$; 
		\ENDFOR \label{alg:reward end}
		\FOR{$i \in \N$} \label{alg:update start}
		\STATE Update $h_i(t)$ and $\hat{\mu}_i(t)$ according to $d_i(t)$ and $X_i(t)$. 
		\ENDFOR \label{alg:update end}
	\end{algorithmic}  
\end{algorithm} 


\section{Main Results} \label{sec:results}
In this section, we analyze the performance of our proposed LFG algorithm and present our main results. Specifically, we show that the LFG algorithm is feasibility-optimal (i.e., it can satisfy any feasible requirement of minimum selection fraction for each individual arm) in Section~\ref{subsec:feasibility} and derive an upper bound on the time-average regret in Section~\ref{subsec:regret}.


\subsection{Feasibility Optimality} \label{subsec:feasibility}
We first present the feasibility-optimality result. That is, the LFG algorithm can satisfy the fairness constraints in \eqref{eq:fraction requirement} for any minimum selection fraction vector $\mathbf{r}$ strictly inside the maximal feasibility region $\mathcal{C}$. 

Note that the constraints in \eqref{eq:fraction requirement} are satisfied as long as the virtual queue system defined in \eqref{queue} is \emph{mean rate stable} \cite[pp. 56-57]{neely2010stochastic}, i.e., $\lim_{T \to \infty} \frac{\mathbb{E}[\sum_{i=1}^N Q_i(T)]}{T}=0$. In our virtual queue system, mean rate stability is implied by a stronger notion called \emph{strong stability}, i.e., $\limsup_{T \to \infty} \frac{1}{T}\sum_{t=0}^{T-1} \mathbb{E}[\sum_{i=1}^N Q_i(t)] < \infty$.
Therefore, in order to prove feasibility-optimality, it is sufficient to show that the virtual queue system is strongly stable whenever the minimum selection fraction vector $\mathbf{r}$ is strictly inside $\mathcal{C}$. We state this result in Theorem~\ref{thm:feasibility}.

\begin{theorem}\label{thm:feasibility}
	The LFG algorithm is feasibility-optimal. Specifically, for any minimum selection fraction $\mathbf{r}$ strictly inside the maximal feasibility region $\mathcal{C}$, the virtual queue system defined in \eqref{queue} is strongly stable under LFG. That is, 
	\begin{equation}
	\limsup_{T\to \infty} \frac{1}{T}\sum_{t=0}^{T-1} \mathbb{E} \left[\sum_{i=1}^N Q_i(t) \right] \leq \frac{B}{\epsilon} < \infty, \label{stability result}
	\end{equation}
	where $B \triangleq \frac{N}{2} + \eta m w_{\max}$ and $\epsilon$ is some positive constant satisfying that $\mathbf{r}+\epsilon\mathbf{1}$ is still strictly inside $\mathcal{C}$, with $\mathbf{1}$ being the $N$-dimensional vector of all ones. 
\end{theorem}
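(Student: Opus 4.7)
The plan is to prove strong stability of the virtual queue system via a quadratic Lyapunov drift argument in the style of \cite{neely2010stochastic}, with one extra step to absorb the UCB-weighted term that appears in LFG's selection rule. Define the Lyapunov function $L(t) \triangleq \frac{1}{2}\sum_{i=1}^N Q_i(t)^2$ and the conditional one-slot drift $\Delta(t) \triangleq \mathbb{E}[L(t+1) - L(t) \mid \mathbf{Q}(t)]$. Squaring the recursion \eqref{queue}, using $([x]^+)^2 \leq x^2$, and noting that $(r_i - d_i(t))^2 \leq 1$ because $r_i \in (0,1)$ and $d_i(t) \in \{0,1\}$, a routine manipulation yields
\[
\Delta(t) \;\leq\; \frac{N}{2} \;+\; \sum_{i=1}^N Q_i(t)\bigl(r_i - \mathbb{E}[d_i(t) \mid \mathbf{Q}(t)]\bigr).
\]

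Next I would exploit the optimality of the LFG selection rule \eqref{best super arm}. For any alternative feasible action vector $\mathbf{d}'(t)$ induced by some super arm in $\mathcal{S}(A(t))$, the maximization in \eqref{best super arm} gives
\[
\sum_{i=1}^N Q_i(t)\, d_i(t) \;\geq\; \sum_{i=1}^N Q_i(t)\, d_i'(t) \;+\; \eta \sum_{i=1}^N w_i \bar{\mu}_i(t)\bigl(d_i'(t) - d_i(t)\bigr).
\]
Since at most $m$ arms are selected in either action and $w_i \bar{\mu}_i(t) \leq w_{\max}$, the last sum is bounded below by $-\eta m w_{\max}$. Substituting back into the drift bound gives
\[
\Delta(t) \;\leq\; B \;+\; \sum_{i=1}^N Q_i(t)\bigl(r_i - \mathbb{E}[d_i'(t) \mid \mathbf{Q}(t)]\bigr),
\]
with $B = \frac{N}{2} + \eta m w_{\max}$, matching the constant in the theorem statement. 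This holds for \emph{every} feasible alternative action, which is the key lever for the next step.

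Now I would invoke Lemma~\ref{lem:Lemma_randomized_existed}. Because $\mathbf{r}$ is strictly inside $\mathcal{C}$, some $\epsilon > 0$ makes $\mathbf{r} + \epsilon \mathbf{1}$ still feasible, so there exists an $A$-only policy $\tilde{\alpha}$ satisfying $\mathbb{E}[d_i^{\tilde{\alpha}}(t)] \geq r_i + \epsilon$ for all $i$. Since an $A$-only policy depends only on $A(t)$, which is i.i.d.\ and independent of $\mathbf{Q}(t)$, plugging $\mathbf{d}'(t) = \mathbf{d}^{\tilde{\alpha}}(t)$ into the displayed inequality yields the negative-drift condition
\[
\Delta(t) \;\leq\; B \;-\; \epsilon \sum_{i=1}^N Q_i(t).
\]
Taking total expectation, telescoping over $t = 0, 1, \ldots, T-1$, and using $L(0) = 0$ and $L(T) \geq 0$ gives $\epsilon \sum_{t=0}^{T-1} \mathbb{E}[\sum_i Q_i(t)] \leq B T$; dividing by $\epsilon T$ and taking $\limsup$ delivers \eqref{stability result}, which in turn implies mean rate stability and hence \eqref{eq:fraction requirement}, establishing feasibility-optimality.

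The main technical subtlety, and the only place the argument departs from the textbook template, is the comparison step in the second paragraph: LFG does not minimize a pure drift objective but a drift-plus-UCB-penalty objective, so one must separate the queue term from the UCB term and show that the latter can only perturb the bound by an additive $\eta m w_{\max}$. Everything else is essentially mechanical once this separation is in place; the finiteness of $m$, the uniform weight bound $w_{\max}$, and the truncation $\bar{\mu}_i(t) \in [0,1]$ are exactly what make this perturbation $\mathbf{Q}(t)$-independent and thus harmless for stability.
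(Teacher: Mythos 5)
Your proposal is correct and follows essentially the same route as the paper's proof: a quadratic Lyapunov drift bound, a comparison against the $A$-only policy guaranteed by Lemma~\ref{lem:Lemma_randomized_existed} using the optimality of the selection rule \eqref{best super arm}, absorption of the UCB term into the additive constant $\eta m w_{\max}$, and a telescoping (Lyapunov Drift Theorem) conclusion. The only cosmetic difference is that you rearrange the comparison inequality directly, whereas the paper adds and subtracts $\eta w_i \bar{\mu}_i(t)$ inside the conditional expectation before dropping the nonnegative term; these are algebraically equivalent.
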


We prove Theorem~\ref{thm:feasibility} by using standard Lyapunov-drift analysis \cite{neely2010stochastic}. The detailed proof is provided in Appendix~\ref{app:proof_feasibility}.

\emph{Remark}:
Note that the work of \cite{cai2018learning} also studies an MAB problem with minimum-guarantee constraints. However, their work differs significantly from ours because their considered minimum guarantee is for the total rewards (of some type/level) rather than for each individual arm, i.e., fairness among arms is not modeled. More importantly, the proposed learning algorithm in \cite{cai2018learning} may violate the constraints. Although they show that the violations are upper bounded by $O(T^{5/6})$, this upper bound implies that the constraints may not be satisfied even after a long enough time. In stark contrast, Theorem~\ref{thm:feasibility} states that our proposed LFG algorithm can satisfy the (long-term) fairness constraints as long as the requirement is feasible. Another difference is that they do not consider sleeping bandits, which can further complicate the problem.


\subsection{Upper Bound on Regret} \label{subsec:regret}
In this subsection, we prove an upper bound on the time-average regret (as defined in \eqref{eq:regret}) under the LFG algorithm. This upper bound is achieved uniformly over time (i.e., for any finite time horizon $T$) rather than asymptotically when $T$ goes to infinity. We state this result in Theorem~\ref{thm:regret}.

\begin{theorem} \label{thm:regret}
Under the LFG algorithm, the time-average regret defined in \eqref{eq:regret} has the following upper bound:
\begin{equation}
R_{\text{LFG}}(T) \leq \frac{N}{2\eta} + \frac{\beta_1 \sqrt{mNT\log T}+ \beta_2N}{T}, \label{eq:regret result}
\end{equation}
where $\beta_1 \triangleq 2\sqrt{6}w_{\max}$, and $\beta_2 \triangleq (1+\frac{5\pi^2}{12}) w_{\max}$.
\end{theorem}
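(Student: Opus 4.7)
The plan is to combine the Lyapunov drift-plus-penalty framework (used for Theorem~\ref{thm:feasibility}) with a standard UCB-style concentration argument, exploiting the fact that LFG's selection rule in \eqref{best super arm} is exactly the greedy minimizer of the drift-plus-penalty bound when the unknown mean $\mu_i$ is replaced by the UCB estimate $\bar{\mu}_i(t)$.

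First I would set $L(t) := \tfrac{1}{2}\sum_{i}Q_i^2(t)$ and derive the one-step drift inequality $\Delta(t) := L(t+1)-L(t) \le \tfrac{N}{2} + \sum_i Q_i(t)\bigl(r_i - d_i(t)\bigr)$, exactly as in the proof of Theorem~\ref{thm:feasibility}. Adding the penalty $-\eta R(t)$, taking conditional expectation given the history $\mathcal{F}_t$, and reorganizing,
\begin{equation*}
\mathbb{E}[\Delta(t)-\eta R(t)\mid\mathcal{F}_t] \le \tfrac{N}{2} + \sum_i Q_i(t)r_i - \mathbb{E}\Bigl[\sum_{i\in S(t)}\bigl(Q_i(t)+\eta w_i\mu_i\bigr)\,\Big|\,\mathcal{F}_t\Bigr].
\end{equation*}
Since LFG selects $S(t)$ to maximize $\sum_{i\in S}(Q_i(t)+\eta w_i\bar{\mu}_i(t))$ over all feasible super arms in $\mathcal{S}(A(t))$, while the optimal $A$-only policy $\alpha^{*}$ from \eqref{LP} is a valid competitor in each round, I would compare the two choices under the UCB estimates and then translate back to true means.

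The pivotal step is to introduce the good event $\mathcal{G}_t := \{\bar{\mu}_i(t)\ge\mu_i \text{ for all } i\}$, on which Hoeffding's inequality combined with the exploration bonus $\sqrt{3\log t/(2h_i(t-1))}$ gives $\mathbb{P}(\mathcal{G}_t^c) = O(Nt^{-2})$, and on which $\bar{\mu}_i(t)\le \mu_i + 2\sqrt{3\log t/(2h_i(t-1))}$. On $\mathcal{G}_t$, LFG's greedy optimality yields
\begin{equation*}
\sum_{i\in S_{\text{LFG}}(t)}(Q_i(t)+\eta w_i\mu_i) \ge \sum_{i\in S_{\alpha^*}(t)}(Q_i(t)+\eta w_i\mu_i) - 2\eta\sum_{i\in S_{\text{LFG}}(t)} w_i\sqrt{\tfrac{3\log t}{2h_i(t-1)}},
\end{equation*}
and the properties of $\alpha^{*}$ (from Lemma~\ref{lem:Lemma_randomized_existed} and \eqref{optimal}) give $\mathbb{E}[\sum_{i\in S_{\alpha^*}(t)}(Q_i(t)+\eta w_i\mu_i)\mid Q(t)] \ge \sum_i Q_i(t)r_i + \eta R^{*}$. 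Substituting back produces the key per-round inequality
\begin{equation*}
\mathbb{E}[\Delta(t) - \eta R(t)\mid\mathcal{F}_t] \le \tfrac{N}{2} - \eta R^{*} + 2\eta\,\mathbb{E}\Bigl[\sum_{i\in S_{\text{LFG}}(t)} w_i\sqrt{\tfrac{3\log t}{2h_i(t-1)}}\,\Big|\,\mathcal{F}_t\Bigr] + O(w_{\max}Nt^{-2}),
\end{equation*}
where the last term absorbs the contribution of $\mathcal{G}_t^c$.

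Finally I would sum over $t=0,\dots,T-1$, telescope $\Delta(t)$ (using $L(0)=0$ and $L(T)\ge 0$ so that $\sum_t \Delta(t)\ge 0$ in expectation can be dropped from the right side after rearrangement), and bound the aggregated exploration term via the textbook counting argument: $\sum_{t=0}^{T-1}\sum_{i\in S(t)}1/\sqrt{h_i(t-1)}\le \sum_i 2\sqrt{h_i(T-1)}\le 2\sqrt{N\sum_i h_i(T-1)}\le 2\sqrt{mNT}$ by Cauchy--Schwarz and $\sum_i h_i(T-1)\le mT$. This immediately yields the $\beta_1\sqrt{mNT\log T}$ contribution with $\beta_1=2\sqrt{6}\,w_{\max}$, while $\sum_{t\ge 1} t^{-2} = \pi^2/6$ (applied to the failure-probability tail and, separately, to the initial pull corrections for rounds where $h_i(t-1)=0$) contributes the $\beta_2 N$ constant. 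Dividing by $\eta T$ delivers exactly \eqref{eq:regret result}.

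The main obstacle is the third step: integrating UCB with the queue-weighted selection rule. The standard UCB regret decomposition pins the loss to a single suboptimal arm per round, but here LFG chooses up to $m$ arms and the selection is biased by the virtual queues $Q_i(t)$, so the usual per-arm gap argument is inapplicable. The clean workaround above, comparing LFG to $\alpha^{*}$'s (random) super-arm choice inside the drift-plus-penalty inequality rather than to a single best arm, is what makes the UCB exploration bonus sum out to $O(\sqrt{mNT\log T})$ independently of the $Q_i(t)$ values.
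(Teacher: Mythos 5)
Your proposal is correct and follows essentially the same route as the paper's proof: a drift-plus-penalty (drift-plus-regret) inequality, a per-round comparison of LFG's UCB-greedy choice against the optimal $A$-only policy $\alpha^*$ using the queue-independence of $\alpha^*$ and $\mathbb{E}[d_i^*(t)]\ge r_i$, a two-sided UCB concentration event whose failure probabilities sum to the $\pi^2$ constants in $\beta_2$, and the same $\sum_a 1/\sqrt{a-1}$ plus Jensen/Cauchy--Schwarz counting argument yielding the $2\sqrt{6}\,w_{\max}\sqrt{mNT\log T}$ term. The only cosmetic difference is that the paper routes the comparison through an intermediate greedy-with-true-means policy $\pi'$ and bounds the over- and under-estimation errors ($C_2$ and $C_3$) separately per arm, whereas you compare to $\alpha^*$ directly under a global good event; just note that the upper bound $\bar{\mu}_i(t)\le \mu_i+2\sqrt{3\log t/(2h_i(t-1))}$ requires the second (upper-tail) half of that two-sided event, not the half $\{\bar{\mu}_i(t)\ge\mu_i\}$ you named.
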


We prove Theorem~\ref{thm:regret} by using a similar line of regret analysis in \cite{hsu2018integrate}. The detailed proof is provided in Appendix~\ref{sec:proof_regret}.


\emph{Remark}:
The derived regret upper bound in \eqref{eq:regret result} is quite appealing as it separately captures the impact of the fairness constraints and the impact of the uncertainty in the mean rewards for any finite time horizon $T$.
Note that the regret upper bound in \eqref{eq:regret result} has two terms. The first term $\frac{N}{2\eta}$ is inversely proportional to $\eta$ and is attributed to the impact of the fairness constraints. Specifically, when $\eta$ is small, the LFG algorithm gives a higher priority to meeting the fairness requirement by favoring an arm with a larger ``debt" (i.e., virtual queue length) as in \eqref{best arm}, even if this arm has a small estimated reward. This results in a larger regret captured in the first term. Similarly, a larger $\eta$ leads to a smaller regret captured in the first term, but it will take longer for the LFG algorithm to converge to a point satisfying the fairness constraints. This interesting tradeoff can also be observed from our simulation results in Section~\ref{sec:simulation}.
The second term $\frac{\beta_1 \sqrt{mNT\log{T}}+ \beta_2 N}{T}$ is of the order $O(\sqrt{\log{T}/T})$. This part of the regret corresponds to the notion of regret in typical MAB problems and is attributed to the cost that needs to be paid in the learning/exploration process. Note that the second term is an \emph{instance-independent} upper bound that does not depend on the problem-specific parameter $\bm{\mu}$. Our derived bound on the time-average regret is consistent with the instance-independent result for basic MAB problems~\cite[Ch. 2.4.3]{bubeck2012regret}\footnote{Time-average regret $O(\sqrt{\log{T}/T})$ vs. cumulative regret $O(\sqrt{T \log{T}})$.}.

\begin{figure*}[!t]
	\begin{minipage}[t]{0.33\textwidth}
		\centering
		\includegraphics[width=1\textwidth]{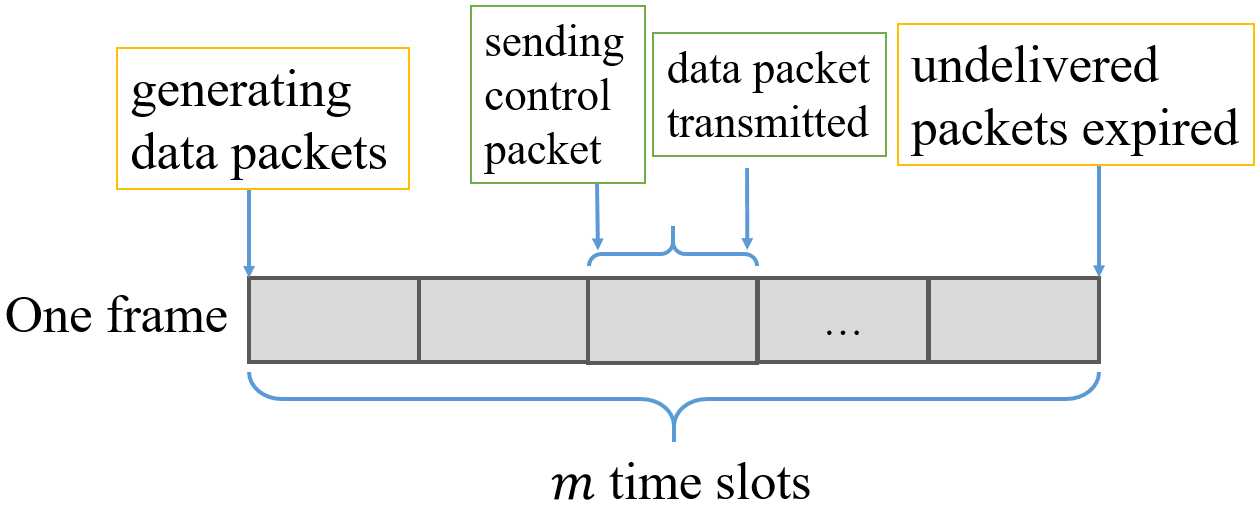}
		\caption{Scheduling of real-time traffic}\label{fig-time}
	\end{minipage}
	\begin{minipage}[t]{0.33\textwidth}
		\centering                                            
		\includegraphics[width=0.85\textwidth]{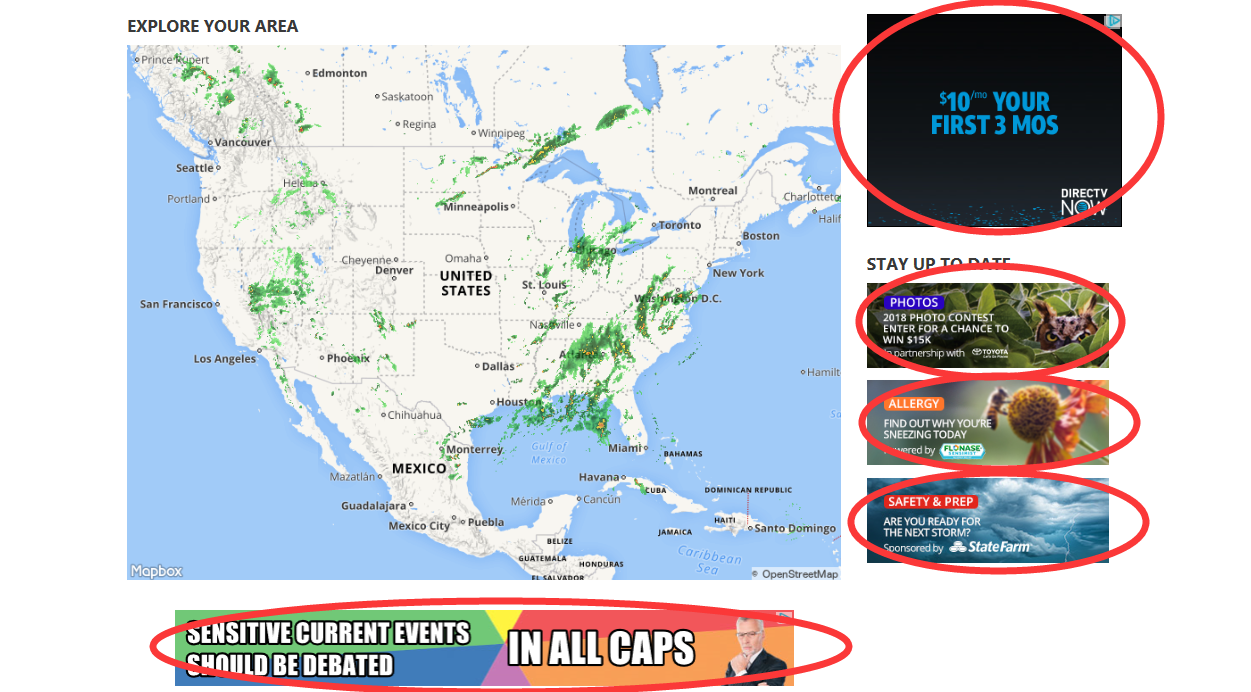}
		\caption{Ad placement}\label{fig:ad-placement}
	\end{minipage}
	\begin{minipage}[t]{0.33\textwidth}
		\centering
		\includegraphics[width=0.8\textwidth]{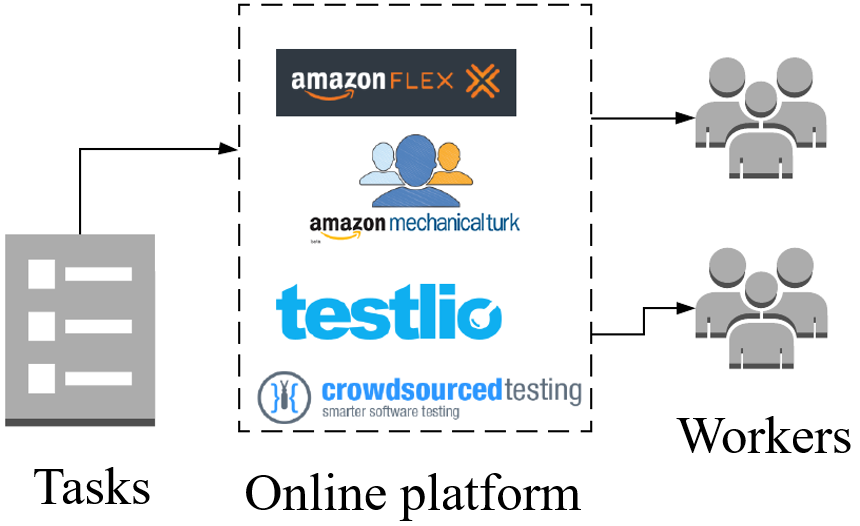}
		\caption{Task assignment in crowdsourcing}\label{fig:crowdsourcing} 
	\end{minipage}
\end{figure*}

\section{Applications} \label{sec:applications}
In this section, we provide more detailed discussions about real-world applications of our proposed CSMAB-F framework. Specifically, we will discuss the following three applications as examples: scheduling of real-time traffic in wireless networks~\cite{hou2009theory}, ad placement in online advertising systems~\cite{AdSpeed}, and task assignment in crowdsourcing platforms~\cite{basik2018fair}. 
               
\subsection{Scheduling of Real-time Traffic in Wireless Networks}
Consider the problem of scheduling real-time traffic with QoS constraints in a single-hop wireless network.
Assume that there are $N$ clients competing for a shared wireless channel to transmit packets to a common AP (see, e.g., \cite{hou2009theory}). Time is slotted. The AP decides which client(s) can transmit at what times. Consider a scheduling cycle, called a frame, that consists of $m$ consecutive time slots. Every client generates one data packet at the beginning of each frame. To avoid interference, we assume that at most one client can transmit in each time slot. Note that some clients may sometimes be unable to transmit when experiencing poor channel conditions (due to fading or mobility). Assume that the channel conditions remain unchanged during a frame but may vary over frames and that the AP obtains the exact knowledge about the channel conditions via probing. At the beginning of each frame, the AP makes scheduling decisions by selecting an available client to transmit in each of the $m$ time slots; at the beginning of each time slot, the AP broadcasts a control packet that announces the scheduling decision, and then, the selected client transmits a packet to the AP in that time slot. We model real-time traffic by assuming that packets have a lifetime of $m$ time slots and expire at the end of the frame. The above framework is illustrated in Fig.~\ref{fig-time}. While a successfully delivered packet will generate a utility, which could represent the value of the information contained in the packet, an expired packet will be dropped at the end of the frame. We assume that the utility corresponding to each client is a random variable, and its mean is unknown \emph{a priori}. There is a weight associated with each client, indicating the importance of the information provided by the client.

The goal of the AP is to maximize the cumulative utilities by scheduling packet transmissions in the face of unknown mean utilities. In addition, each client has a QoS requirement that a minimum delivery ratio must be guaranteed. Clearly, the scheduling problem with minimum delivery ratio guarantee can naturally be formulated as a CSMAB-F problem.

\subsection{Ad Placement in Online Advertising Systems}
Online advertising has emerged as a very popular Internet application~\cite{AdSpeed}. Take a page of \emph{Weather.com} website shown in Fig.~\ref{fig:ad-placement} for example. When an Internet user visits the webpage, the publisher dynamically chooses multiple ads from the ads pool to display in the ad-mix areas (highlighted by red circles in Fig.~\ref{fig:ad-placement}). We assume that the ads pool consists of $N$ ads, and the ad-mix area has a limited capacity, which allows displaying no more than $m$ ads simultaneously. Note that some ads are irrelevant to certain users, depending on the context including users' characteristics (gender, interest, location, etc.) and content of the webpage. Hence, such irrelevant ads can be viewed as unavailable to those users, and the availability of ads depends on the distribution of the context. After seeing a displayed ad, the user may or may not click it. The click-through rate (i.e., the rate at which the ad is clicked) of each ad is unknown \emph{a priori}. Each click of an ad will potentially generate a revenue for the advertiser, which can be viewed as the weight of the ad.

The goal of the ad publisher is to maximize the cumulative revenues by determining a best subset of ads to display in the face of unknown click-through rates. 
In addition, the publisher must guarantee a minimum display frequency for advertisers who pay a fixed cost over a specified period, regardless of users' responses to the displayed ads. Obviously, the ad placement problem with minimum display frequency guarantee fits perfectly into our proposed CSMAB-F framework.

\subsection{Task Assignment in Crowdsourcing Platforms}
The increasing application of crowdsourcing is significantly changing the way people conduct business and many other activities~\cite{basik2018fair}.
Consider a crowdsourcing platform such as Amazon Mechanical Turk, Amazon Flex (for package delivery), and Testlio (for software testing), as shown in Fig.~\ref{fig:crowdsourcing}. 
Tasks arriving to the crowdsourcing platform will be assigned to a group of workers with different unknown skill levels.
Specifically, when a task arrives, the platform may divide the task into multiple sub-tasks; then the sub-tasks will be assigned to no more than $m$ workers from a pool of $N$ workers, due to the number of sub-tasks or a limited budget. 
Note that some workers could be unavailable to take certain tasks due to various reasons (time conflicts, location constraints, limited skills, preferences, etc.). 
Each completed task will generate a payoff that depends on the quality or efficiency of the workers. The payoff is a random variable, and its mean is unknown \emph{a priori} due to unknown skill levels of workers.

The goal of the crowdsourcing platform is to maximize the cumulative payoffs by determining an optimal task allocation in the face of unknown mean payoffs. In addition, the platform has to take fairness towards workers into account through a minimum assignment ratio guarantee for each worker. This fairness guarantee helps maintain a healthy and sustainable platform with improved worker satisfaction and higher worker participation. Apparently, our proposed CSMAB-F framework can be applied to address the task assignment problem with minimum assignment ratio guarantee.

\begin{figure}[!t] 
	\begin{minipage}[t]{0.5\textwidth}
	\centering
	\subfigure[Time-average Regret]{
		\label{fig:Alg_compare:regret} 
		\includegraphics[width=0.7\textwidth]{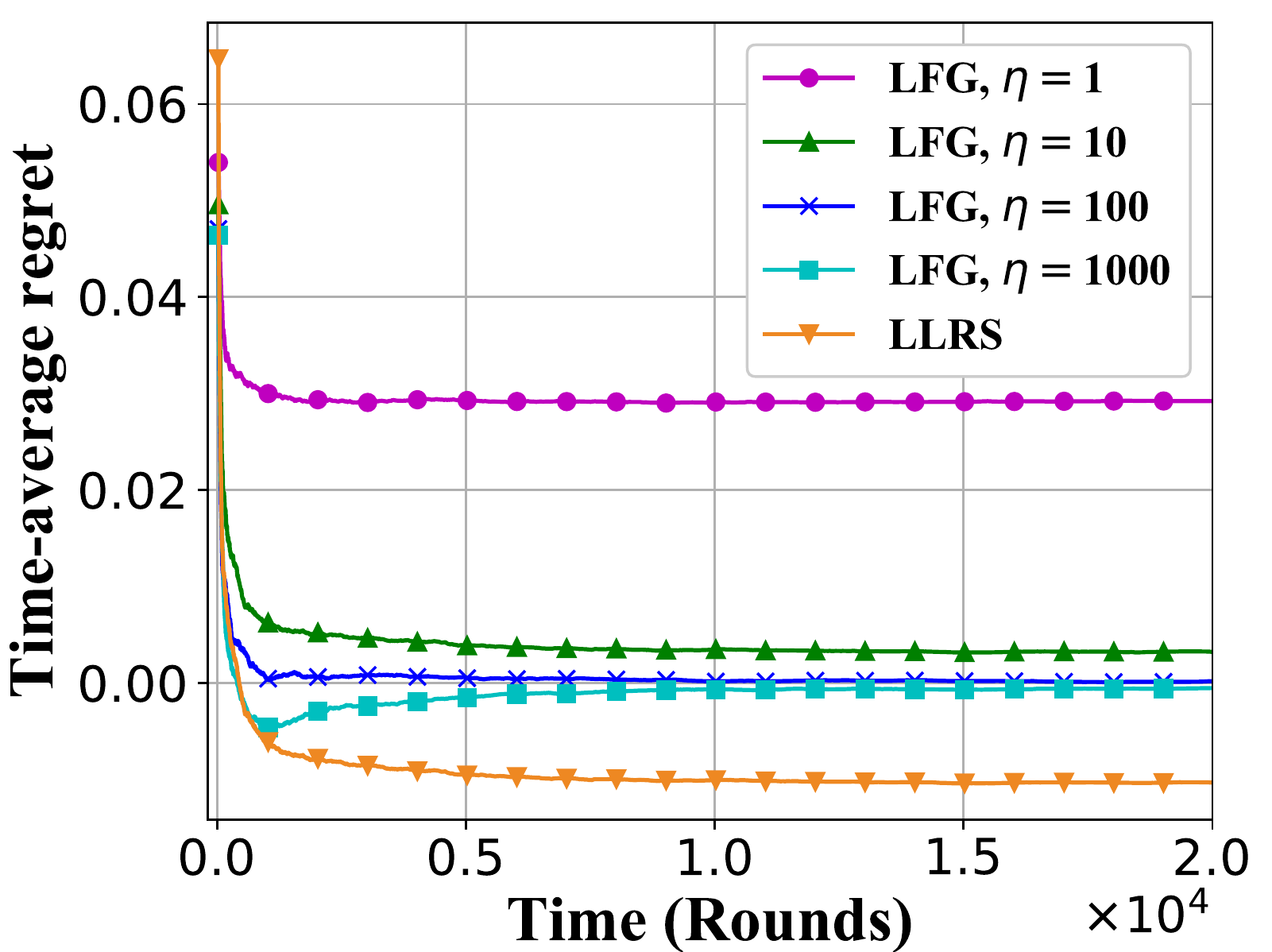}}
	\subfigure[Selection fraction]{
		\label{fig:Alg_compare:fraction} 
		\includegraphics[width=0.7\textwidth]{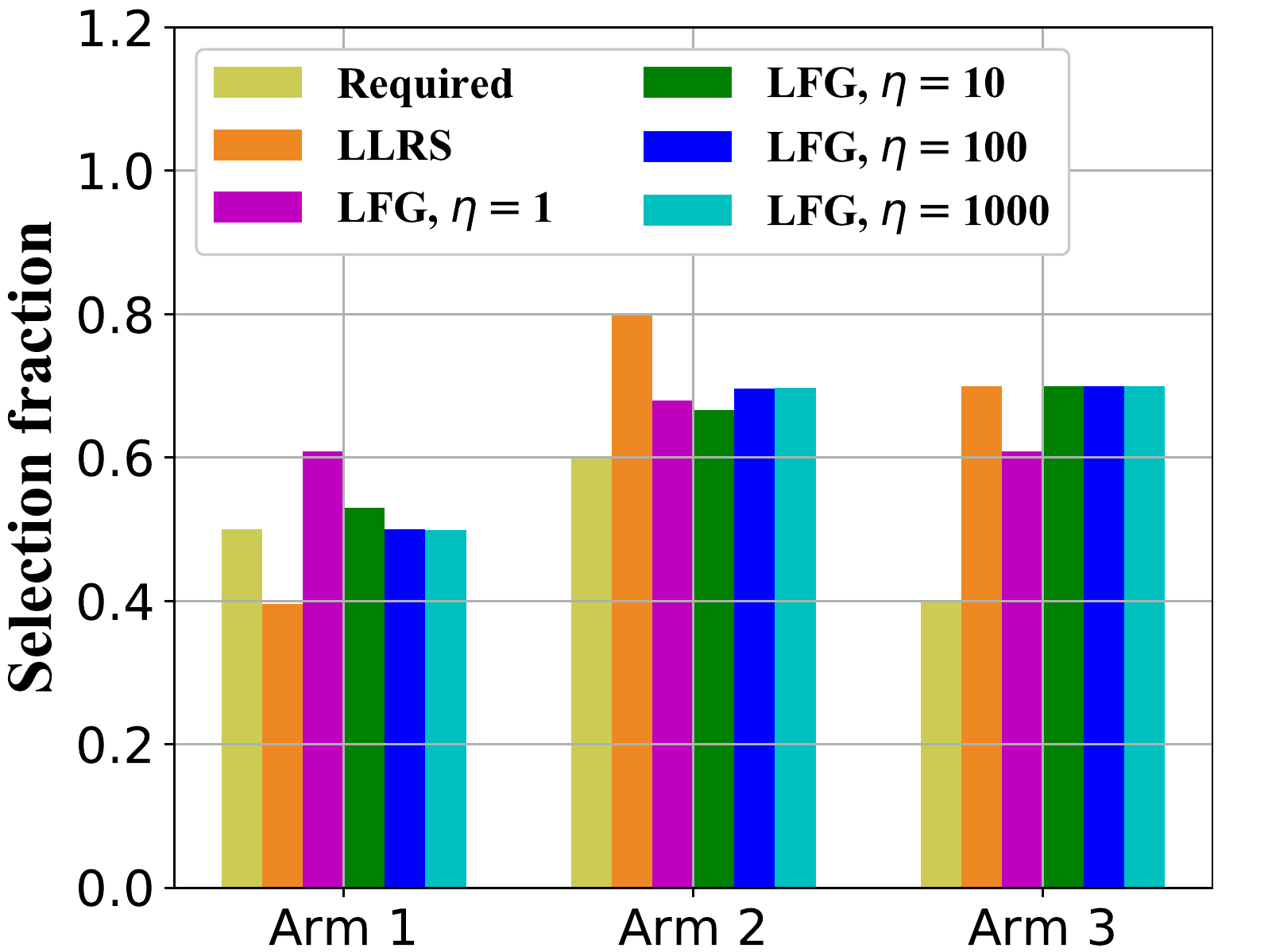}}
	\caption{Performance comparisons of different algorithms}
	\label{fig:Alg_compare} 
	\end{minipage}
\end{figure}

			

\section{Numerical Results} \label{sec:simulation}
In this section, we conduct simulations to evaluate the performance of our proposed LFG algorithm and discuss several interesting observations based on the simulation results.

We consider two scenarios for the simulations: (i) $N=3$ and $m=2$; (ii) $N=10$ and $m=6$. Since the observations are similar for these two scenarios, we will focus on the discussion about the first scenario due to space limitations. We assume that the availability of arm $i$ is a binary random variable that is \emph{i.i.d.} over time with mean $p_i$.
Then, the distribution of available arms can be computed as 
$P_{\mathbf{A}}(\A)=\prod_{i\in \A}p_i \prod_{i \notin \A}(1-p_i)$
for all $\A \in \mathcal{P}(\N)$. 
We also assume binary rewards with the same unit weight (i.e., $w_i=1$) for all the arms.
The detailed setting of other parameters is as follows: $\bm{\mu} = (0.4, 0.5, 0.7)$, $\mathbf{r} = (0.5, 0.6, 0.4)$, and $\mathbf{p} = (p_1, p_2, p_3) = (0.9, 0.8, 0.7)$.


\begin{figure*}[!t]
	\begin{minipage}[t]{0.99\textwidth}
	\centering
	\subfigure[Arm $1$]{
		\label{fig:trade-off: arm1} 
		\includegraphics[width=0.30\textwidth]{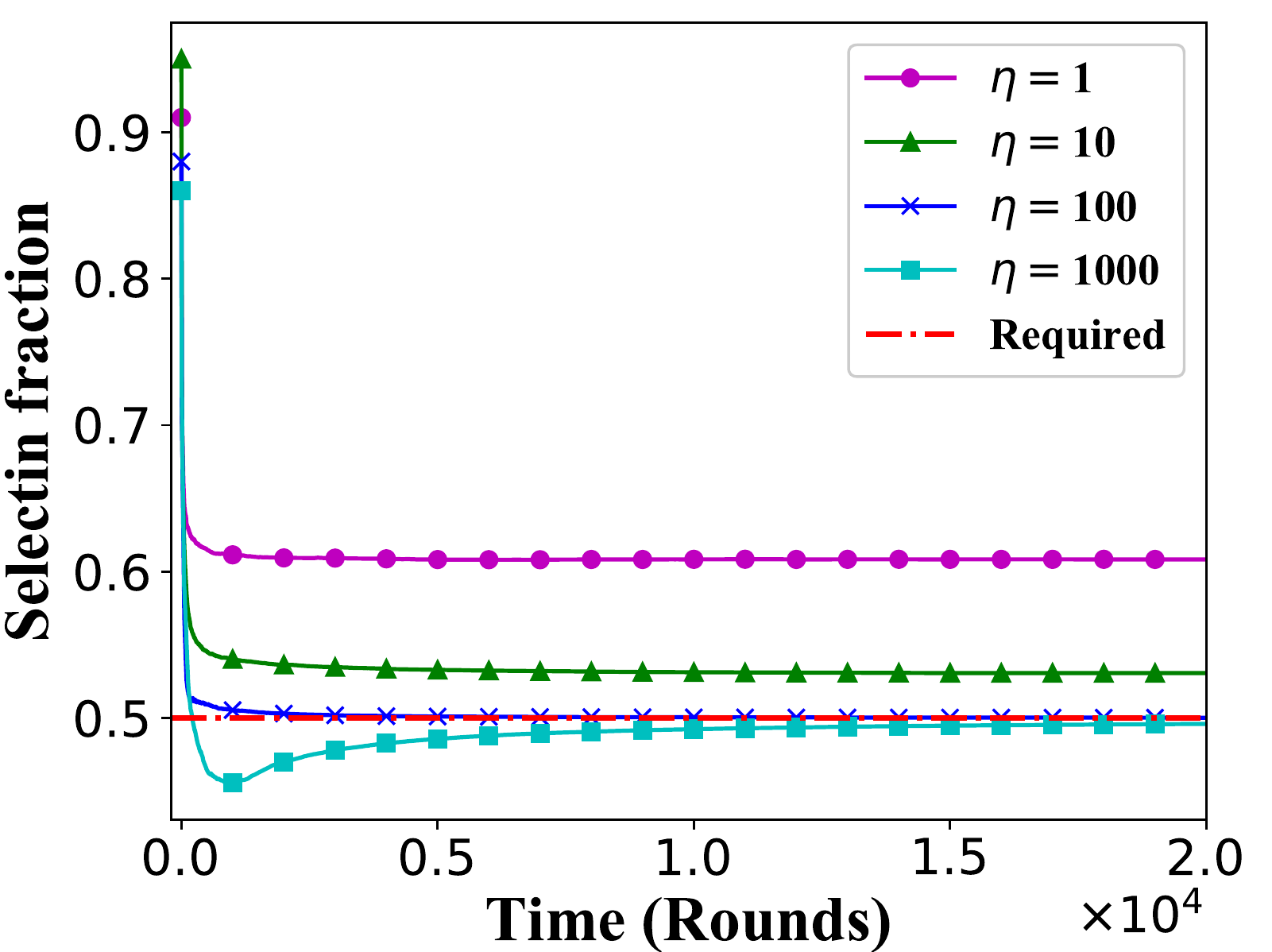}}
	\quad
	\subfigure[Arm $2$]{
		\label{fig:trade-off: arm2} 
		\includegraphics[width=0.30\textwidth]{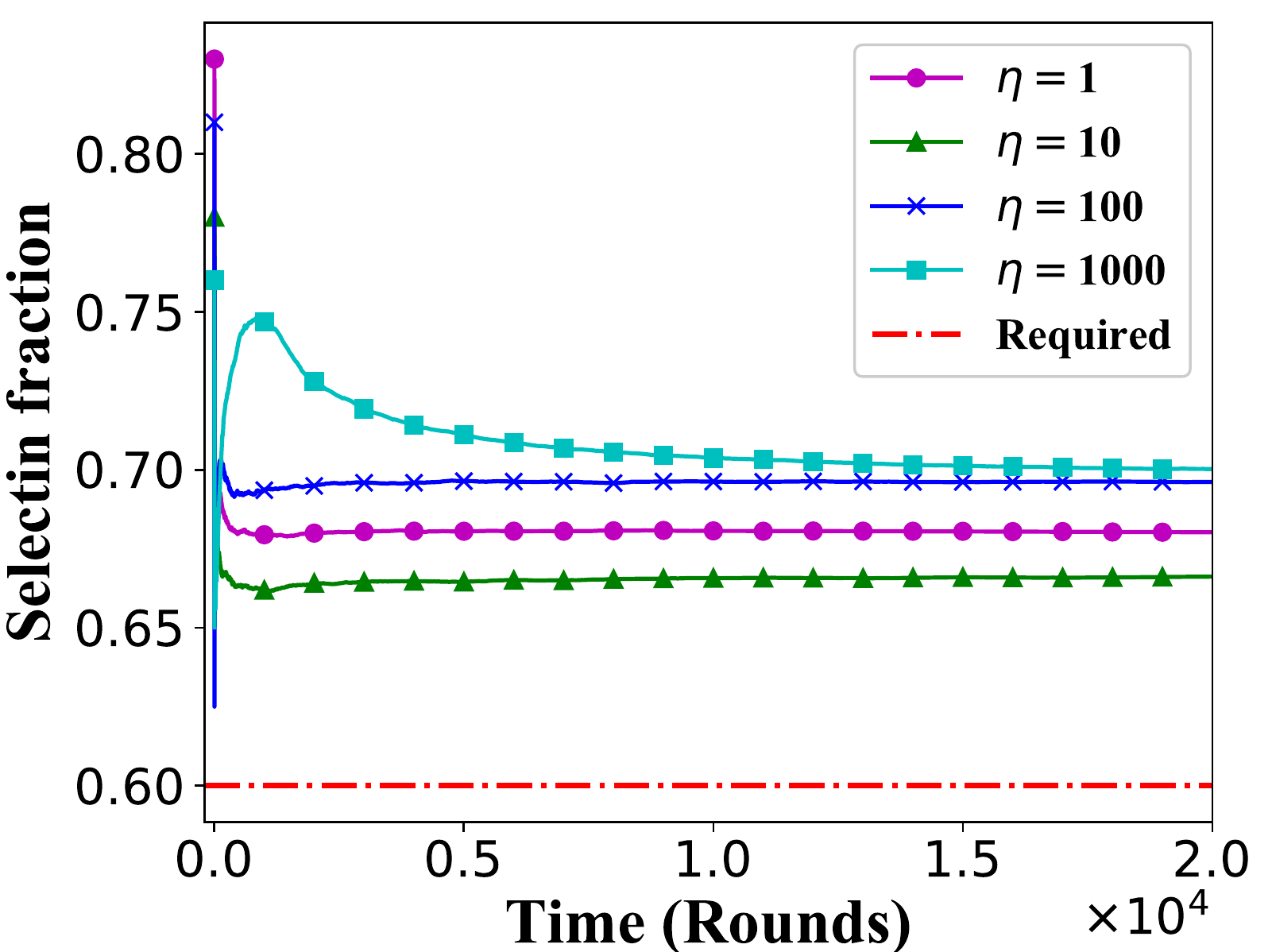}}
	\quad
	\subfigure[Arm $3$]{
		\label{fig:trade-off: arm3} 
		\includegraphics[width=0.30\textwidth]{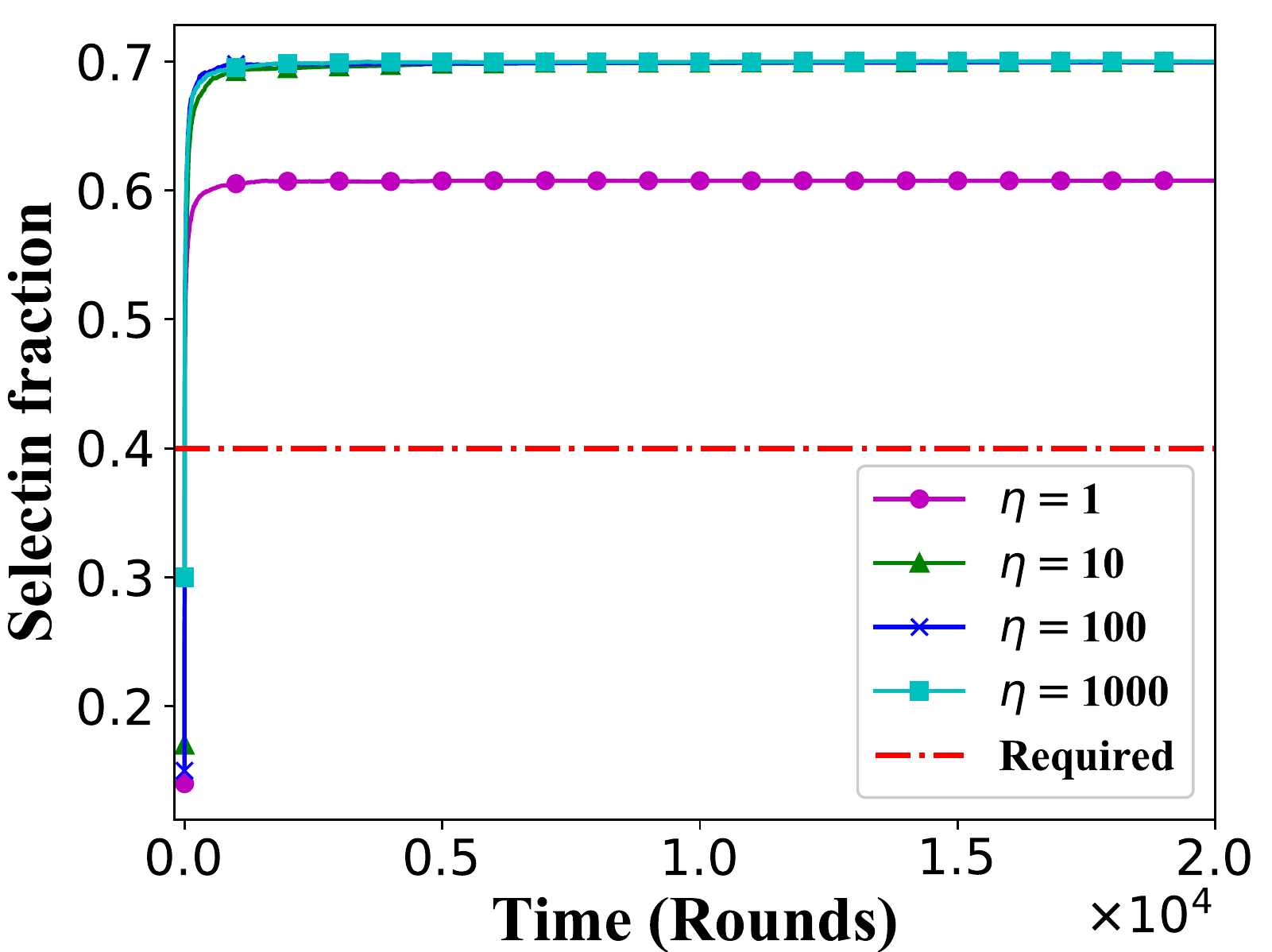}}
	\caption{Selection fraction over time under LFG with different values of $\eta$}
	\label{fig:trade-off} 
	\end{minipage}
\end{figure*}


First, in order to demonstrate that LFG can effectively meet the fairness requirement, we compare LFG with a fairness-oblivious combinatorial MAB algorithm, called \emph{Learning with Linear Rewards (LLR)} {\cite{gai2012combinatorial}}. We modify the LLR algorithm to accommodate sleeping bandits; the modified version is called \emph{LLR for Sleeping bandits (LLRS)}.
In each round $t$, observing the set of available arms $A(t)$, LLRS selects a super arm $S(t)$ that has the largest weighted sum of the UCB estimates among all the feasible super arms in $\mathcal{S}(A(t))$, i.e., $S(t) \in \argmax_{\S \in \mathcal{S}(A(t))} \sum_{i \in \S} w_i \bar{\mu}_i(t)$. 
Note that LLRS is oblivious of the fairness constraints in \eqref{eq:fraction requirement}.

We simulate LFG with $\eta \in \{1, 10, 100, 1000\}$ and LLRS for $T = 2 \times 10^4$ rounds (at which all the considered algorithms are observed to converge) and present the results in Fig.~\ref{fig:Alg_compare}. 
Fig.~\ref{fig:Alg_compare:regret} shows the time-average regret over time for the considered algorithms; Fig.~\ref{fig:Alg_compare:fraction} shows the selection fraction of each arm at the end of the simulation (i.e., at $T = 2 \times 10^4$). From Fig.~\ref{fig:Alg_compare:regret}, we can make the following observations: (i) LFG with a larger $\eta$ results in a smaller regret, and LFG with $\eta \ge100$ approaches a zero regret; (ii) LLRS achieves the smallest regret, which is even negative (i.e., it achieves a reward larger than the optimal $R^*$). Observation (i) is expected, as we explained in Section~\ref{subsec:regret}: the upper bound on regret in \eqref{eq:regret result} approaches zero when both $\eta$ and $T$ become large. Observation (ii) is not surprising because LLRS is fairness-oblivious and may produce an infeasible solution. Indeed, Fig.~\ref{fig:Alg_compare:fraction} shows that Arm~1's selection fraction under LLRS is smaller than the required value (0.4 vs. 0.5). This is because Arm 1 has the smallest mean reward and is not favored under LLRS, which is unaware of the fairness contraints.
On the other hand, Fig.~\ref{fig:Alg_compare:fraction} also shows that with different values of $\eta$, LFG consistently satisfies the required minimum selection fraction, which verifies our theoretical result on feasibility-optimality of LFG (Theorem~\ref{thm:feasibility}).

At first glance, the above observations seem to suggest that LFG with a large $\eta$ is desirable because that leads to a vanishing regret while still providing fairness guarantee. However, what is missing here is the speed of convergence to a point satisfying the fairness requirement, which is another critical design concern in practice. To understand the convergence speed of LFG with different values of $\eta$, in Fig.~\ref{fig:trade-off} we plot the selection fraction over time for each arm. Taking Fig.~\ref{fig:trade-off: arm1} for example, we can observe that the convergence slows down as $\eta$ increases. In addition, before LFG with $\eta=1000$ converges (e.g., when $T \le 10^4$), the actual selection fraction of Arm 1 does not meet the required minimum value of 0.5. Since the constraints may be temporarily violated, the regret could even be negative before LFG converges (see $\eta=1000$ in Fig.~\ref{fig:Alg_compare:regret}). Therefore, the simulation results reveal an interesting tradeoff between the regret and the convergence speed. We can control and optimize this tradeoff by tuning $\eta$. For example, for the considered scenario, LFG with $\eta=100$ seems to achieve a good balance between the regret and the convergence speed.

\begin{figure}[!t]
		\centering
		\includegraphics[width=0.7\linewidth]{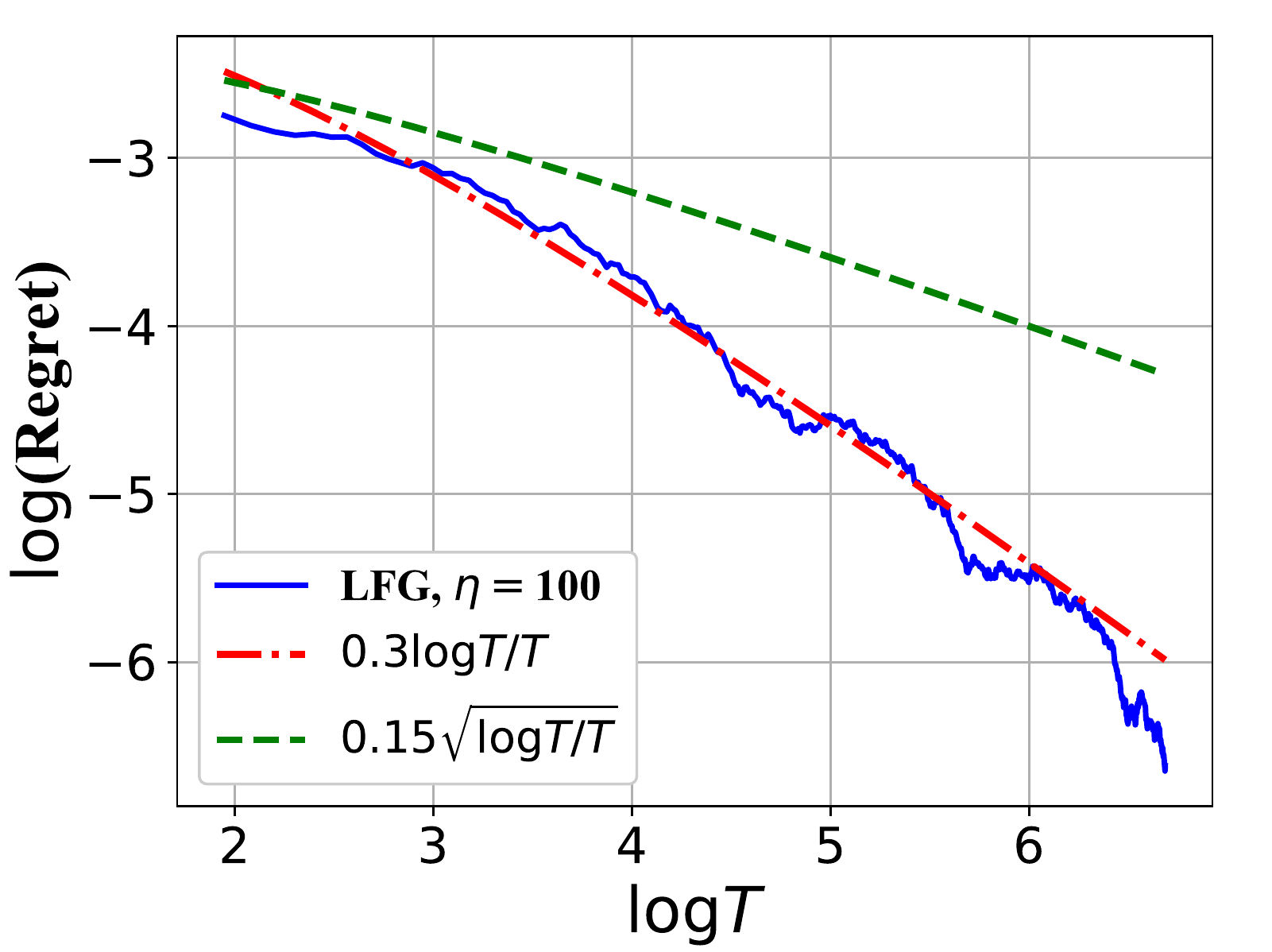}
		\caption{Regret vs. $T$}\label{fig:regret_time}
\end{figure}

Finally, we want to investigate the tightness of the upper bound derived in \eqref{eq:regret result}. Consider the average of 100 independent simulation runs for LFG with $\eta=100$. Fig.~\ref{fig:regret_time} shows the time-average regret vs. the time horizon $T$ in a log-log plot. Recall that the upper bound in \eqref{eq:regret result} has two terms. The impact of $T$ appears in the second term that is of the order $\sqrt{\log{T}/T}$. When $T$ becomes large, it becomes difficult to see the impact of $T$ on the regret as the first term $\frac{N}{2\eta}$ becomes dominant. Therefore, we consider the region with $T \le 1000$ (i.e., $\log{T} \le 6.9$). Fig.~\ref{fig:regret_time} seems to suggest that the time-average regret follows the order $\log{T}/T$ rather than $\sqrt{\log{T}/T}$. This implies that the upper bound in \eqref{eq:regret result} is not tight. One reason could be that the $\sqrt{\log{T}/T}$ bound is instance-independent. It remains open whether one can come up with novel analytical techniques to derive a better bound of $\log{T}/T$.

\section{Conclusion} \label{sec:conclusion}
In this paper, we proposed a unified CSMAB-F framework that integrates several critical factors (i.e., combinatorial actions, availability of actions, and fairness) of the system in many real-world applications. 
In particular, no prior work has studied MAB problems with fairness constraints on a minimum selection fraction for each individual arm. To address the new challenges introduced by modeling these factors, we developed a new LFG algorithm that achieves a provable regret upper bound while effectively providing fairness guarantee.

We leave the following interesting questions to our future work: Can one prove a tighter upper bound on regret? How to develop efficient algorithms for a more general model that potentially accounts for nonlinear reward functions, more sophisticated combinatorial structures (e.g., matroids), and more general fairness criteria other than temporal fairness that we consider in this paper?


\section{ACKNOWLEDGMENTS}
The  authors  would  like  to  thank  Prof.  R.  Srikant  at  University of Illinois at Urbana-Champaign and Prof. Bin Li at University of Rhode Island for valuable comments on this paper.

\bibliographystyle{IEEEtran}
\bibliography{reference}

\appendix

\subsection{Proof of Theorem~\ref{thm:feasibility}} \label{app:proof_feasibility}

\begin{proof}
Consider the LFG algorithm. To prove feasibility optimality, we want to show that for any vector $\mathbf{r}$ strictly inside the maximal feasibility region $\mathcal{C}$, the minimum selection fraction requirements (i.e., Eq.~\eqref{eq:fraction requirement}) are satisfied. Note that the requirements of \eqref{eq:fraction requirement} are satisfied as long as the virtual queue system defined in \eqref{queue} is \emph{mean rate stable} \cite[pp. 56-57]{neely2010stochastic}, i.e., $\lim_{T \to \infty} \frac{\mathbb{E}[\sum_{i=1}^N Q_i(T)]}{T}=0$. In our virtual queue system, mean rate stability is implied by a stronger notion called \emph{strong stability}, i.e., $\limsup_{T \to \infty} \frac{1}{T}\sum_{t=0}^{T-1} \mathbb{E}[\sum_{i=1}^N Q_i(t)] < \infty$.
Therefore, it is sufficient to show that the virtual queue system is strongly stable for any vector $\mathbf{r}$ strictly inside $\mathcal{C}$.

We proceed the proof using the Lyapunov-drift analysis~\cite{neely2010stochastic}. 
Let $\mathbf{Q}(t)=(Q_1(t),\dots, Q_N(t))$ be the queue-length vector in round $t$. Consider the following Lyapunov function:
\begin{equation}
L(\mathbf{Q}(t)) \triangleq \frac{1}{2} \sum_{i=1}^N Q_i^2(t).
\end{equation}
The drift of the Lyapunov function is given by
\begin{equation}
\begin{aligned}
&L(\mathbf{Q}(t+1))-L(\mathbf{Q}(t)) \\
&= \frac{1}{2}\sum_{i=1}^{N}{Q_i^2(t+1)} -  \frac{1}{2} \sum_{i=1}^{N}{Q_i^2(t)} \\
&\overset{(a)}{\leq} \frac{1}{2}\sum_{i=1}^{N}(Q_i(t)+r_i-d_i(t))^2 -  \frac{1}{2} \sum_{i=1}^{N}{Q_i^2(t)} \\
&= \frac{1}{2} \sum_{i=1}^{N}{(r_i-d_i(t))^2}+ \sum_{i=1}^{N}{(r_i-d_i(t))Q_i(t)} \\
&\overset{(b)}{\leq} \frac{N}{2} + \sum_{i=1}^{N}{r_iQ_i(t)} -\sum_{i=1}^{N}{d_i(t)Q_i(t)}, \\ \label{eq:interval drift}
\end{aligned}
\end{equation}
where $(a)$ is from the queue-length evolution \eqref{queue} and $(b)$ holds because both $r_i$ and $d_i(t)$ are within $[0, 1]$. 
Taking conditional expectation of both sides of the above gives
\begin{equation}
\begin{aligned}
&\mathbb{E}[L(\mathbf{Q}(t+1))-L(\mathbf{Q}(t))|\mathbf{Q}(t)]  \\
&\le \frac{N}{2} + \sum_{i=1}^{N}{r_iQ_i(t)} - \mathbb{E} \left[ \sum_{i=1}^{N}{d_i(t)Q_i(t)} |\mathbf{Q}(t) \right] \\
&= \frac{N}{2} + \sum_{i=1}^{N}{r_iQ_i(t)} - \mathbb{E} \left[ \sum_{i\in S(t)}{d_i(t)Q_i(t)} |\mathbf{Q}(t) \right] \\
&= \frac{N}{2} + \sum_{i=1}^{N}{r_iQ_i(t)}  +\mathbb{E}\left[\sum_{i \in S(t)}{\eta w_i\bar{\mu}_i(t)}|\mathbf{Q}(t)\right]  \\
&\quad -\mathbb{E}\left[\sum_{i \in S(t)}{\left(Q_i(t)+\eta w_i\bar{\mu}_i(t)\right)}|\mathbf{Q}(t)\right] \\
&\leq \frac{N}{2} + \sum_{i=1}^{N}{r_iQ_i(t)}+\eta m w_{\max} \\
&\quad -\mathbb{E}\left[\sum_{i \in S(t)}{\left(Q_i(t)+\eta w_i\bar{\mu}_i(t)\right)}|\mathbf{Q}(t)\right] \\
&= B +\sum_{i=1}^{N}{r_iQ_i(t)} \\ &-\mathbb{E}\left[\sum_{i \in S(t)}{\left(Q_i(t)+\eta w_i\bar{\mu}_i(t)\right)}|\mathbf{Q}(t) \right], \label{eq:drift}
\end{aligned}
\end{equation}
where the last inequality holds because $w_i \leq w_{\max}$, $\bar{\mu}_i(t) \leq 1$, and $|S(t)|\leq m$, and $B \triangleq \frac{N}{2} + \eta m w_{\max}$ is a constant.

Recall that $\mathbf{r}$ is strictly inside $\mathcal{C}$. Then, there must exist some $\epsilon>0$ such that $\mathbf{r} + \epsilon \mathbf{1}$ is also strictly inside $\mathcal{C}$, where $\mathbf{1}$ denotes the $N$-dimensional all-ones vector. 
By Lemma~\ref{lem:Lemma_randomized_existed}, there exists an $A$-only policy $\alpha$ that can support vector $\mathbf{r}+\epsilon \mathbf{1}$. That is, 
\begin{equation}
	\sum_{\A \in \PN } P_{\mathbf{A}}(\A)\sum_{\S \in \mathcal{S}(\A): i \in \S} q^{\alpha}_{\S}(\A) \geq r_i+\epsilon,~\forall i \in \N, \label{eq:random_feasibility}
\end{equation}  
where $\mathbf{q}^{\alpha} = [q_{\S}^{\alpha}(\A), \forall \S \in \mathcal{S}(\A), \forall \A \in \PN]$ is the group of probability distributions associated with policy $\alpha$.
Recall that in each round $t$, policy $\alpha$ observes available arms $A(t)$ and chooses a super arm $S^{\alpha}(t) \in \mathcal{S}(A(t))$ independent of $\mathbf{Q}(t)$.
Then, the last term of the right-hand side of \eqref{eq:drift} satisfies
\begin{equation}
	\begin{aligned}
	&\mathbb{E}\left[\sum_{i \in S(t)}{\left(Q_i(t)+\eta w_i\bar{\mu}_i(t)\right)}|\mathbf{Q}(t)\right]  \\
	&=  \mathbb{E}\left[ \mathbb{E}\left[\sum_{i \in S(t)}{\left(Q_i(t)+\eta w_i\bar{\mu}_i(t)\right)}|\mathbf{Q}(t),A(t)\right] \right] \\
	&\overset{(a)}{\geq} \mathbb{E}\left[	 \mathbb{E}\left[\sum_{i \in S^{\alpha}(t)}{\left(Q_i(t)+\eta w_i\bar{\mu}_i(t)\right)}|\mathbf{Q}(t),A(t)\right] \right] \\ 
	&\geq \mathbb{E}\left[ \mathbb{E}\left[\sum_{i \in S^{\alpha}(t)}{Q_i(t)}|\mathbf{Q}(t),A(t)\right] \right] \\
	&\overset{(b)}{=} \mathbb{E}\left[ \mathbb{E}\left[\sum_{i \in S^{\alpha}(t)}{Q_i(t)}|A(t)\right] \right] \\
	&\overset{(c)}{=} \mathbb{E}\left[ \sum_{\S \in \mathcal{S}(A(t))} q^{\alpha}_{\S}(A(t))\sum_{i \in \S}{Q_i(t)} \right] \\
	&= \sum_{\A \in \PN}P_{\mathbf{A}}(\A) \sum_{\S \in \mathcal{S}(\A)} q^{\alpha}_{\S}(\A)\sum_{i \in \S}{Q_i(t)} \\
	&= \sum_{i=1}^{N}Q_i(t)\sum_{\A \in \PN}P_{\mathbf{A}}(\A)\sum_{\S \in \mathcal{S}(\A): i \in \S} q^{\alpha}_{\S}(\A),
	\end{aligned} 
	\label{eq:LFG_better_optimal}
\end{equation}
where $(a)$ is due to the operations of LFG (specifically, \eqref{best super arm}), $(b)$ holds because policy $\alpha$'s decision is independent of $\mathbf{Q}(t)$, and $(c)$ is due to the operations of policy $\alpha$.  
	
Substituting \eqref{eq:LFG_better_optimal} into  \eqref{eq:drift} and applying \eqref{eq:random_feasibility} give
\begin{equation}
\begin{aligned}
&\mathbb{E}[L(\mathbf{Q}(t+1))-L(\mathbf{Q}(t))|\mathbf{Q}(t)] \\
&\leq B + \sum_{i=1}^{N}{r_iQ_i(t)} \\ &\quad -\sum_{i=1}^{N}Q_i(t)\sum_{\A \in \PN}P_{\mathbf{A}}(\A)\sum_{\S \in \mathcal{S}(\A): i \in \S} q^{\alpha}_{\S}(\A) \\
&\leq B + \sum_{i=1}^{N}{r_iQ_i(t)}- \sum_{i=1}^N{Q_i(t)}(r_i+\epsilon) \\ \label{eq:drift-result}
&= B - \epsilon\sum_{i=1}^N{Q_i(t)}.
\end{aligned}
\end{equation}

Finally, invoking the Lyapunov Drift Theorem {\cite[Theorem~4.1]{neely2010stochastic}} gives 
\eqref{stability result}, which completes the proof.
\end{proof}

\subsection{Proof of Theorem~\ref{thm:regret}} \label{sec:proof_regret}

\begin{proof}
Consider an optimal $A$-only policy $\alpha^*$ and its associated probability distributions $\mathbf{q}^* = [q_{\S}^*(\A), \forall \S \in \mathcal{S}(\A), \forall \A \in \PN]$. Let $S^*(t)$ be the super arm selected by policy $\alpha^*$ in round $t$. Vector $\mathbf{d}^*(t)=(d^*_1(t),\dots, d^*_N(t))$ is the corresponding action vector. Due to \eqref{optimal}, we have
\begin{equation}
\begin{aligned}
R^* &= \sum_{\A \in \mathcal{P}(\N) } P_{\mathbf{A}}(\A) \sum_{\S \in \mathcal{S}(\A)} q^*_{\S}(\A)\sum_{i \in \S}w_i\mu_i \\
&=\mathbb{E}\left[\sum_{i\in S^*(t)} w_i{\mu}_i\right]. 
\label{eq:optimal_reward}
\end{aligned}
\end{equation}

Plugging \eqref{eq:optimal_reward} into \eqref{eq:regret}, we can rewrite the regret of LFG as
\begin{equation}
\begin{aligned}
R_{\text{LFG}}(T) &= R^* - \frac{1}{T}  \mathbb{E}\left[\sum_{t=0}^{T-1}\sum_{i\in S(t)} w_iX_i(t)\right] \\  
&=  \frac{1}{T} \sum_{t=0}^{T-1} \left\{R^*- \mathbb{E}\left[\sum_{i\in S(t)} w_i{\mu}_i\right]\right\}\\
&= \frac{1}{T} \sum_{t=0}^{T-1} \mathbb{E}\left[ \vphantom{\sum_{i\in S^*(t)} w_i{\mu}_i-\sum_{i\in S(t)} w_i{\mu}_i} \right.
\underbrace{\sum_{i\in S^*(t)} w_i{\mu}_i-\sum_{i\in S(t)} w_i{\mu}_i}_{\Delta R(t)} 
\left. \vphantom{\sum_{i\in S^*(t)} w_i{\mu}_i-\sum_{i\in S(t)} w_i{\mu}_i}\right].
\label{eq:regret_sum}
\end{aligned}
\end{equation}
We define the following quantity:
\begin{equation}
\begin{aligned}
\Delta R(t) &\triangleq \sum_{i\in S^*(t)}w_i{\mu}_i - \sum_{i\in S(t)}w_i{\mu}_i \\
&= \sum_{i=1}^{N}w_i{\mu}_id_i^*(t) - \sum_{i=1}^{N}w_i{\mu}_id_i(t),
\end{aligned}
\end{equation}
which captures the gap between the expected rewards achieved by policy $\alpha^*$ and LFG in round $t$. 
Adding $\Delta R(t)$ scaled by $\eta$ to the drift of the Lyapunov function (i.e., \eqref{eq:interval drift}) gives the drift-plus-regret:
\begin{equation}
\begin{aligned}
&L(\mathbf{Q}(t+1)) - L(\mathbf{Q}(t)) + \eta \Delta R(t)\\
&\leq \frac{N}{2} + \sum_{i=1}^{N}{r_iQ_i(t)} -\sum_{i=1}^{N}{d_i(t)Q_i(t)} \\
&\quad + \eta \sum_{i=1}^{N}w_i{\mu}_id_i^*(t)- \eta \sum_{i=1}^{N}w_i{\mu}_id_i(t)\\
&= \frac{N}{2} +\sum_{i=1}^{N}{(Q_i(t)+\eta w_i{\mu}_i)(d_i^*(t)-d_i(t))}\\
&\quad + \sum_{i=1}^{N}{Q_i(t)(r_i-d_i^*(t))}.
\label{eq:drift_plus_regret}
\end{aligned}
\end{equation}
We can bound the expected drift-plus-regret as
\begin{equation}
\begin{aligned}
&\mathbb{E}[L(\mathbf{Q}(t+1)) - L(\mathbf{Q}(t)) + \eta \Delta R(t)] \\
&\leq \frac{N}{2} +\sum_{i=1}^{N}\mathbb{E}[{(Q_i(t)+\eta w_i{\mu}_i)(d_i^*(t)-d_i(t))}] \\
&\quad + \sum_{i=1}^{N}\mathbb{E}[{Q_i(t)(r_i-d_i^*(t))}]\\
&\leq \frac{N}{2} +\mathbb{E}\left[\vphantom{\sum_{i=1}^{N}{(Q_i(t)+\eta w_i{\mu}_i)(d_i^*(t)-d_i(t))}} \right.
\underbrace{\sum_{i=1}^{N}{(Q_i(t)+\eta w_i{\mu}_i)(d_i^*(t)-d_i(t))}}_{C_1(t)} 
\left. \vphantom{\sum_{i=1}^{N}{(Q_i(t)+\eta w_i{\mu}_i)(d_i^*(t)-d_i(t))}} \right], 
\end{aligned}
\label{eq:expected_drift_plus_regret}
\end{equation}
where the last step follows from $\mathbb{E}[Q_i(t)d_i^*(t)] = \mathbb{E}[Q_i(t)]\mathbb{E}[d_i^*(t)]$ (due to the decision of policy $\alpha^*$ being independent of the queue length $Q_i(t)$) and $\mathbb{E}[d_i^*(t)] \geq r_i$ (because policy $\alpha^*$ is stationary and feasible). Define $C_1(t) \triangleq \sum_{i=1}^{N}{(Q_i(t)+\eta w_i{\mu}_i)(d_i^*(t)-d_i(t))} $. Summing \eqref{eq:expected_drift_plus_regret} for all $t \in \{0,\dots, T-1\}$, using the trick of telescoping sum, and dividing both sides of the inequality by $T \eta$, we obtain
\begin{equation}
\begin{aligned}
&\frac{1}{T \eta}\mathbb{E}[L(\mathbf{Q}(T))-L(\mathbf{Q}(0))]+ \frac{1}{T}\sum_{t=0}^{T-1}\mathbb{E}[\Delta R(t)] \\
& \leq \frac{N}{2 \eta}+\frac{1}{T \eta}\sum_{t=0}^{T-1}\mathbb{E}[C_1(t)].
\end{aligned}
\end{equation}
Since $L(\mathbf{Q}(T)) \geq 0$ and $L(\mathbf{Q}(0))=0$, we have
\begin{equation}
\frac{1}{T}\sum_{t=0}^{T-1}\mathbb{E}[\Delta R(t)] \leq \frac{N}{2\eta }+ \frac{1}{T\eta }\sum_{t=0}^{T-1}\mathbb{E}\left[C_1(t)\right]. \label{eq:r_bar_ubound} 
\end{equation}

In Appendix~\ref{app:Handling A1}, we will show the following bound:

\begin{equation}
\begin{aligned}
&\frac{1}{T\eta }\sum_{t=0}^{T-1}\mathbb{E}[C_1(t)]\\
&\leq \frac{w_{\max}}{T} \left(2\sqrt{6mNT\log{T}}+(1+\frac{5\pi^2}{12})N\right).
\label{eq:C1_sum}
\end{aligned}
\end{equation}


Finally, plugging \eqref{eq:C1_sum} into \eqref{eq:r_bar_ubound} and combining it with \eqref{eq:regret_sum} yield \eqref{eq:regret result}. This completes the proof of Theorem \ref{thm:regret}.
\end{proof}

\subsection{Bounding $C_1(t)$} \label{app:Handling A1}
In this section, we want to show \eqref{eq:C1_sum}.


Consider a policy $\pi^{\prime}$, which, in each round $t$, chooses a super arm $S'(t)$ in the following manner:
\begin{equation}
S^{\prime}(t) \in \argmax_{\S \in \mathcal{S}(A(t))} \sum_{i \in S}(Q_i(t)+\eta w_i {\mu}_i(t)). 
\label{eq:S-prime}
\end{equation} 
Recall that in each round $t$, the LFG algorithm chooses a super arm $S(t)$ according to \eqref{best super arm}. Therefore, we have
\begin{equation}
\sum_{i \in S(t)}(Q_i(t)+\eta w_i{\bar{\mu}}_i(t)) \geq \sum_{i \in S^{\prime}(t)}(Q_i(t)+\eta w_i{\bar{\mu}}_i(t)). 
\label{gap_set}
\end{equation}

Next, we derive an upper bound on $C_1(t)$:
%
\begin{equation}
\begin{aligned}
C_1(t) &= \sum_{i=1}^{N}{(Q_i(t)+\eta w_i{\mu}_i)(d_i^*(t)-d_i(t))} \\
&= \sum_{i \in S^*(t)}{(Q_i(t)+\eta w_i{\mu}_i)} -\sum_{i \in S(t)}{(Q_i(t)+\eta w_i{\mu}_i)} \\
&\overset{(a)}{\leq} \sum_{i \in S^{\prime}(t)}{(Q_i(t)+\eta w_i{\mu}_i)} - \sum_{i \in S(t)}{(Q_i(t)+\eta w_i{\mu}_i)}\\
&\overset{(b)}{\leq} \sum_{i \in S^{\prime}(t)}{(Q_i(t)+\eta w_i{\mu}_i)} - \sum_{i \in S(t)}{(Q_i(t)+\eta w_i{\mu}_i)}\\
&\quad + \sum_{i \in S(t)}(Q_i(t)+\eta w_i{\bar{\mu}}_i(t)) \\
&\quad - \sum_{i \in S^{\prime}(t)}{(Q_i(t)+\eta w_i{\bar{\mu}}_i(t))}\\
&= \eta \left( \vphantom{\sum_{i \in S(t)}{w_i({\bar{\mu}}_i(t) -{\mu}_i)}} \right.
\underbrace{\sum_{i \in S(t)}{w_i({\bar{\mu}}_i(t) -{\mu}_i)}}_{C_2(t)} + \underbrace{\sum_{i \in S^{\prime}(t)}  w_i(\mu_i - {\bar{\mu}}_i(t))}_{C_3(t)}
\left. \vphantom{\sum_{i \in S^{\prime}(t)}  w_i(\mu_i - {\bar{\mu}}_i(t))} \right),
\end{aligned}
\label{eq:C1}
\end{equation}
where $(a)$ is from \eqref{eq:S-prime} and $(b)$ is from \eqref{gap_set}. Define $C_2(t) \triangleq  \sum_{i \in S(t)}{w_i({\bar{\mu}}_i(t) -{\mu}_i)}$ and $ C_3(t) \triangleq \sum_{i \in S^{\prime}(t)} w_i(\mu_i - {\bar{\mu}}_i(t))$. 
In Appendices~\ref{app:upper_bound_C2} and \ref{app:upper_bound_C3}, we will show the following two bounds, respectively:
\begin{align}
\sum_{t=0}^{T-1} \mathbb{E}[C_2(t)] 
& \leq w_{\max} \left(2\sqrt{6mNT\log{T}}+ (1+\frac{\pi^2}{4})N \right)  \label{eq:C2_sum} \\
\sum_{t=0}^{T-1}\mathbb{E}[C_3(t)]
& \leq \frac{\pi^2}{6} w_{\max} N. 
\label{eq:C3_sum}
\end{align}
%


Finally, summing \eqref{eq:C1} for all $t \in \{0,\dots, T-1\}$, dividing both sides of the resulting inequality by $T \eta$, and plugging \eqref{eq:C2_sum} and \eqref{eq:C3_sum} into it yield \eqref{eq:C1_sum}.

\emph{Remark}: The bound in \eqref{eq:C2_sum} consists of two terms: the first term is of the order $O(\sqrt{T \log{T}})$, which corresponds to the notion of regret in typical MAB problems and is attributed to the cost that needs to be paid in the learning/exploration process; the second term is a constant, which is from applying the Chernoff-Hoeffding bound (see, e.g., \cite{auer2002finite}) to a ``bad" event $\{\hat{\mu}_i(t-1)-{\mu}_i > \sqrt{\frac{3\log{t}}{2h_i(t-1)}}\}$. Similarly, the bound in \eqref{eq:C3_sum} is from applying the Chernoff-Hoeffding bound to another ``bad" event $\{\hat{\mu}_i(t-1)-{\mu}_i < -\sqrt{\frac{3\log{t}}{2h_i(t-1)}}\}$.


\subsection{Bounding $C_2(t)$} \label{app:upper_bound_C2}
In this section, we want to show \eqref{eq:C2_sum}.

Consider an arbitrary arm $i$ in $\N$ and an arbitrary round $t=0,1,\dots,T-1$. Let $t^i_a$ be the round in which arm $i$ is played for the $a$-th time. Recall that $h_i(t)$ is the number of times arm $i$ has been played by the end of round $t$. Clearly, we have
$d_i(t^i_a)=1$, $h_i(t^i_a) = a$, and $h_i(t^i_a -1)= a-1$ for all $a \in \{1,2,\dots,h_i(T-1)\}$. In addition, we also have
\begin{equation}
0 \leq t^i_1 < t^i_2 < \dots <t^i_{h_i(T-1)} < T.\label{eq:property}
\end{equation}

Define the following event: 
\begin{equation}
U_i(t) \triangleq \left\{ \bar{\mu}_i(t) < {\mu}_i\right\}.
\end{equation}
Let $E^c$ be the complement of an event $E$, and let $\mathbbm{1}_{\{\cdot\}}$ denote the indicator function. We bound the expectation of $C_2(t)$ as
\begin{equation}
\begin{aligned}
\mathbb{E}[C_2(t)] &= \mathbb{E}\left[\sum_{i=1}^N w_i(\bar{\mu}_i(t)-{\mu}_i)d_i(t)\right] \\ 
 &= \mathbb{E}\left[\sum_{i=1}^N w_i(\bar{\mu}_i(t)-{\mu}_i)d_i(t)\mathbbm{1}_{\{U_i(t)\}}\right]\\
 &\quad + \mathbb{E}\left[\sum_{i=1}^N w_i(\bar{\mu}_i(t)-{\mu}_i)d_i(t)\mathbbm{1}_{\{U_i^c(t)\}}\right] \\
&\overset{(a)}{\leq} \mathbb{E}\left[\sum_{i=1}^N w_i(\bar{\mu}_i(t)-{\mu}_i)d_i(t)\mathbbm{1}_{\{U_i^c(t)\}}\right] \\
&\overset{(b)}{\leq} w_{\max} \sum_{i=1}^N \mathbb{E}[ \underbrace{(\bar{\mu}_i(t)-{\mu}_i)d_i(t)\mathbbm{1}_{\{U_i^c(t)\}}}_{J_1(t)}],
\label{eq:C2_J1} 
\end{aligned}
\end{equation}
where $(a)$ is due to $\bar{\mu}_i(t) < {\mu}_i$ when event $U_i(t)$ happens and $(b)$ is due to $\bar{\mu}_i(t) \ge {\mu}_i$ when event $U^c_i(t)$ happens.
Define $J_1(t) \triangleq (\bar{\mu}_i(t)-{\mu}_i)d_i(t)\mathbbm{1}_{\{U_i^c(t)\}}$.
Also, define another event: 
\begin{equation}
F_i(t) \triangleq \left\{ \hat{\mu}_i(t-1)-{\mu}_i \leq \sqrt{\frac{3\log t}{2h_i(t-1)}} \right\}.    
\end{equation}
Then, summing $J_1(t)$ for all $t \in \{0,\dots, T-1\}$ gives
\begin{equation}
\begin{aligned}
&\sum_{t=0}^{T-1} J_1(t) \\
&\overset{(a)}{=} \sum_{a=1}^{h_i(T-1)} (\bar{\mu}_i(t^i_a)-{\mu}_i)\mathbbm{1}_{\{U_i^c(t^i_a)\}} \\
&\overset{(b)}{\leq} 1 + \sum_{a=2}^{h_i(T-1)}(\bar{\mu}_i(t^i_a)-{\mu}_i)\mathbbm{1}_{\{U_i^c(t^i_a)\}} \\
&= 1 + \sum_{a=2}^{h_i(T-1)}(\bar{\mu}_i(t^i_a)-{\mu}_i) \mathbbm{1}_{\{U_i^c(t^i_a)\}} (\mathbbm{1}_{\{F_i(t^i_a)\}} + \mathbbm{1}_{\{F^c_i(t^i_a)\}}) \\
&\overset{(c)}{\leq} 1 + \sum_{a=2}^{h_i(T-1)} (\underbrace{(\bar{\mu}_i(t^i_a)-{\mu}_i)\mathbbm{1}_{\{U_i^c(t^i_a) \cap F_i(t^i_a)\}}}_{J_2(t_a^i)} + \mathbbm{1}_{\{F_i^c(t^i_a)\}}), 
\label{eq:J1_sum}
\end{aligned}
\end{equation}
where $(a)$ is due to $d_i(t^i_a)=1$ for all $a \in \{1,2,\dots,h_i(T-1)\}$ and $d_i(t)=0$ for all other $t$, $(b)$ is due to $\bar{\mu}_i(t^i_1)-\mu_i \le 1$, and $(c)$ is due to $(\bar{\mu}_i(t^i_a)-\mu_i)\mathbbm{1}_{\{U_i^c(t^i_a)\}} \leq 1$.
We define $J_2(t_a^i) \triangleq (\bar{\mu}_i(t^i_a)-{\mu}_i)\mathbbm{1}_{\{U_i^c(t^i_a) \cap F_i(t^i_a)\}}$ and want to bound both $\sum_{a=2}^{h_i(T-1)} \mathbb{E}[J_2(t_a^i)]$ and $\sum_{a=2}^{h_i(T-1)} \mathbb{E}[\mathbbm{1}_{\{F_i^c(t^i_a)\}}]$. 

First, we want to bound $\sum_{a=2}^{h_i(T-1)} \mathbb{E}[J_2(t_a^i)]$. Consider $t^i_a$ for all $a \in \{2,\dots, h_i(T-1)\}$. Suppose event $F_i(t^i_a)$ happens. Then, we have
\begin{equation}
\hat{\mu}_i(t^i_a-1)-{\mu}_i \leq \sqrt{\frac{3\log{t^i_a}}{2h_i(t^i_a-1)}}.
\label{eq:hatmu_mu}
\end{equation}
From \eqref{UCB estimation}, we also have 
\begin{equation}
\bar{\mu}_i(t^i_a) \leq \hat{\mu}_i(t^i_a-1) + \sqrt{\frac{3\log{t^i_a}}{2h_i(t^i_a-1)}}.
\label{eq:barmu_hatmu}
\end{equation}
Combining \eqref{eq:hatmu_mu} and \eqref{eq:barmu_hatmu} gives
\begin{equation}
\bar{\mu}_i(t^i_a) - {\mu}_i \leq 2\sqrt{\frac{3\log{t^i_a}}{2h_i(t^i_a-1)}},
\end{equation}
which implies that for all $a \in \{2,\dots, h_i(T-1)\}$, we have
\begin{equation}
\begin{aligned}
J_2(t_a^i) &= (\bar{\mu}_i(t^i_a)-{\mu}_i)\mathbbm{1}_{\{U_i^c(t^i_a) \cap F_i(t^i_a)\}} \\
&\leq 2\sqrt{\frac{3\log{t^i_a}}{2h_i(t^i_a-1)}}.
\label{eq:J2}
\end{aligned}
\end{equation}

Then, summing $J_2(t^i_a)$ for all $a \in \{2,\dots, h_i(T-1)\}$ gives
\begin{equation}
\begin{aligned}
\sum_{a=2}^{h_i(T-1)} J_2(t^i_a) 
&\overset{(a)}{\leq} \sum_{a=2}^{h_i(T-1)}  2\sqrt{\frac{3\log t^i_a}{2h_i(t^i_a-1)}} \\
& \overset{(b)}{\leq} \sqrt{6\log{T}}\sum_{a=2}^{h_i(T-1)} \frac{1}{\sqrt{a-1}}  \\
& \overset{(c)}{\leq} \sqrt{6\log{T}}\left(1+\int_{1}^{h_i(T-1)}\frac{1}{\sqrt{x}} \, dx \right) \\
& \leq 2\sqrt{6h_i(T-1)\log{T}},
\end{aligned}
\end{equation}
where $(a)$ is from \eqref{eq:J2}, $(b)$ is due to $t^i_a \leq T$ (from \eqref{eq:property}) and $h_i(t^i_a-1) = a-1$ for all $a \in \{2,\dots, h_i(T-1)\}$, and $(c)$ is due to a basic relationship between the considered summation and integral. Therefore, we have
\begin{equation}
\sum_{a=2}^{h_i(T-1)} \mathbb{E}[J_2(t^i_a)] 
\leq 2\sqrt{6\log{T}} \mathbb{E}[\sqrt{h_i(T-1)}].
\label{eq:Expected_J2_sum}
\end{equation}


Next, we want to bound $\sum_{a=2}^{h_i(T-1)} \mathbb{E}[\mathbbm{1}_{\{F_i^c(t^i_a)\}}]$.
According to the definition of $t_a^i$, we have $h_i(t^i_a-1)=a-1$, and thus $\hat{\mu}_i(t_a^i-1)$ is the sample mean of $(a-1)$ \emph{i.i.d.} random variables $X_i(t_1^i),\cdots, X_i(t_{a-1}^i)$ with mean $\mu_i$. Further, we know $t^i_a$ must satisfy $a-1 \leq t^i_a \leq T-1$. Hence, 
\begin{equation}
\begin{aligned}
F_i^c(t^i_a)& =  \left\{\hat{\mu}_i(t_a^i-1)-{\mu}_i > \sqrt{\frac{3\log {t^i_a}}{2h_i(t^i_a-1)}}\right\}\\
&\subseteq \cup_{n=a-1}^{T-1} \left\{\hat{\mu}_i(n-1)-{\mu}_i > \sqrt{\frac{3\log {n}}{2(a-1)}}\right\}.
\end{aligned}
\end{equation}
By applying the union bound and the Chernoff-Hoeffding bound (see, e.g., \cite{auer2002finite}), we have 
\begin{equation}
\begin{aligned}
&\mathbb{E}[\mathbbm{1}_{\{F_i^c(t^i_a)\}}] 
= \mathbb{P}\left\{F_i^c(t^i_a)\right\} \\
&\leq \sum_{\tau=a-1}^{T-1} \mathbb{P}\left\{\hat{\mu}_i(\tau-1)-{\mu}_i > \sqrt{\frac{3\log {\tau}}{2(a-1)}}\right\} \\
&\leq  \sum_{\tau=a-1}^{T-1} \frac{1}{{\tau}^3} \leq \frac{1}{(a-1)^3}+\int_{a-1}^{\infty}\frac{1}{x^3} dx \leq \frac{3}{2(a-1)^2}.\notag
\end{aligned}
\end{equation}
Hence, we derive 
\begin{equation}
\begin{aligned}
\sum_{a=2}^{h_i(T-1)} \mathbb{E}[\mathbbm{1}_{\{F_i^c(t^i_a)\}}]
& \leq \sum_{a=2}^{h_i(T-1)}\frac{3}{2(a-1)^2}  \leq \sum_{a=1}^{\infty} \frac{3}{2a^2} = \frac{\pi^2}{4}.
\label{eq:Expected_Fc}
\end{aligned}
\end{equation}

Taking expectation of both sides of \eqref{eq:J1_sum} and plugging \eqref{eq:Expected_J2_sum} and \eqref{eq:Expected_Fc} into it yield
\begin{equation}
\sum_{t=0}^{T-1} \mathbb{E}[J_1(t)] \leq 2\sqrt{6\log{T}} \mathbb{E}[\sqrt{h_i(T-1)}] + 1+\frac{\pi^2}{4}.
\label{eq:J1_expectation} 
\end{equation}


Finally, summing \eqref{eq:C2_J1} for all $t \in \{0,\dots, T-1\}$ and plugging \eqref{eq:J1_expectation} into it yield \eqref{eq:C2_sum}:
\begin{equation}
\begin{aligned}
&\sum_{t=0}^{T-1} \mathbb{E}[C_2(t)]\\
&\leq w_{\max} \sum_{i=1}^{N} \left( 2\sqrt{6\log{T}} \mathbb{E}[\sqrt{h_i(T-1)}] + 1+\frac{\pi^2}{4} \right) \\
&\leq w_{\max} \left(2\sqrt{6mNT\log{T}}+ (1+\frac{\pi^2}{4})N \right), 
\end{aligned}
\end{equation}
where the last step follows from $\frac{1}{N}\sum_{i=1}^{N}\sqrt{h_i(T-1)} \leq \sqrt{\frac{1}{N}\sum_{i=1}^{N}h_i(T-1)}$ (due to Jensen's inequality) and $\sum_{i=1}^{N} h_i(T-1) \leq Tm$ (due to the fact that at most $m$ arms can be selected in each round).

\subsection{Bounding $C_3(t)$}\label{app:upper_bound_C3}

In this section, we want to show \eqref{eq:C3_sum}.

Consider an arbitrary arm $i$ in $\N$ and an arbitrary round $t=0,1,\dots,T-1$. Recall that $C_3(t)=\sum_{i \in S^{\prime}(t)} w_i(\mu_i - {\bar{\mu}}_i(t))$. Let $\mathbf{d}^{\prime}(t)=(d^{\prime}_1(t),\dots, d^{\prime}_N(t))$ be the action vector corresponding to $S^{\prime}(t)$. Also, recall that $U_i(t) = \left\{ \bar{\mu}_i(t) < {\mu}_i\right\}$. Similar to the derivation for $C_2(t)$ in \eqref{eq:C2_J1}, we bound the expectation of $C_3(t)$ as
\begin{equation}
\begin{aligned}
\mathbb{E}[C_3(t)] 
&= \mathbb{E}\left[\sum_{i=1}^{N}w_i ({\mu}_i-\bar{\mu}_i(t))d_i^{\prime}(t) \right]\\
&= \mathbb{E}\left[\sum_{i=1}^{N}w_i ({\mu}_i-\bar{\mu}_i(t))d_i^{\prime}(t)\mathbbm{1}_{\{U_i(t)\}} \right]\\
&\quad + \mathbb{E}\left[\sum_{i=1}^{N}w_i ({\mu}_i-\bar{\mu}_i(t))d_i^{\prime}(t)\mathbbm{1}_{\{U_i^c(t)\}} \right]\\
&\overset{(a)}{\leq} \mathbb{E}\left[\sum_{i=1}^{N}w_i ({\mu}_i-\bar{\mu}_i(t))d_i^{\prime}(t)\mathbbm{1}_{\{U_i(t)\}} \right]\\
&\overset{(b)}{\leq} w_{\max} \sum_{i=1}^{N} \mathbb{E} [ \underbrace{({\mu}_i-\bar{\mu}_i(t))d_i^{\prime}(t)\mathbbm{1}_{\{U_i(t)\}}}_{K_1(t)} ],
\label{eq:C3_K1}
\end{aligned}
\end{equation}
where $(a)$ is due to $\bar{\mu}_i(t) \geq {\mu}_i$ when event $U^c_i(t)$ happens, and $(b)$ is due to $\bar{\mu}_i(t) < {\mu}_i$ when event $U_i(t)$ happens. 
We define $K_1(t) \triangleq ({\mu}_i-\bar{\mu}_i(t))d_i^{\prime}(t)\mathbbm{1}_{\{U_i(t)\}}$ and consider two cases for $\mathbb{E}[K_1(t)]$: i) $t \leq t_1^i$ and ii) $t > t_1^i$. 

In Case i), event $U_i(t)$ must not happen, i.e.,  $\bar{\mu}_i(t) \geq \mu_i$ must hold. This is because $\bar{\mu}_i(t) = 1$ (due to $h_i(t-1)=0$ for $t \leq t_1^i$) and $\mu_i \in [0,1]$. Hence, for all $t \leq t_1^i$ we have 
\begin{equation}
\mathbb{E}[K_1(t)]=0.
\label{eq:K1_Casei}
\end{equation}

In Case ii), suppose event $U_i(t)$ happens. Then we have $\bar{\mu}_i(t) < {\mu}_i \le 1$ and $1\leq h_i(t-1)\leq t$. This, along with \eqref{UCB estimation}, implies $\bar{\mu}_i(t) = \hat{\mu}_i(t-1) + \sqrt{\frac{3\log{t}}{2h_i(t-1)}}$, which further implies
\begin{equation}
U_i(t) = \left\{\hat{\mu}_i(t-1)-{\mu}_i < -\sqrt{\frac{3\log{t}}{2h_i(t-1)}}\right\}. \label{eq:event_Ui}
\end{equation}
This leads to the following bound on $\mathbb{E}[K_1(t)]$ for all $t > t_1^i$:
\begin{equation}
\begin{aligned}
\mathbb{E}[K_1(t)]&= \mathbb{E}[({\mu}_i-\bar{\mu}_i(t)) d_i^{\prime}(t) \mathbbm{1}_{\{U_i(t)\}}] \\
& \leq \mathbb{E}[\mathbbm{1}_{\{U_i(t)\}}] = \mathbb{P}\{U_i(t)\} \\
&= \mathbb{P} \left\{\hat{\mu}_i(t-1)-{\mu}_i < -\sqrt{\frac{3\log{t}}{2h_i(t-1)}}\right\} 
\end{aligned}
\end{equation}
where the inequality is due to ${\mu}_i - \bar{\mu}_i(t) \leq 1$ and $d_i^{\prime}(t) \leq 1$. Note that $\hat{\mu}_i(t-1)$ in Eq. \eqref{eq:event_Ui} is the sample mean of $h_i(t-1)$ \emph{i.i.d.} random variables (donoted as $X(1), X(2), \cdots$) in $[0,1]$ with mean $\mu_i$ and that $h_i(t-1)$ is random but has finite possible values $\{1, \cdots, t\}$. Hence, for each possible value of $h_i(t-1)$, the Chernodd-Hoeffding Bound could be applied. Therefore, we have 
\begin{equation}
\begin{aligned}
&\mathbb{E}[K_1(t)]\leq \mathbb{P}\{U_i(t)\} \\
= &\sum_{h=1}^t \mathbb{P} \left\{\left\{h_i(t\!-\!1)\!=\!h\right\}\cap \left\{\hat{\mu}_i(t\!-\!1)\!-\!{\mu}_i \!<\! \!-\!\sqrt{\frac{3\log{t}}{2h_i(t\!-\!1)}}\right\}\right\} \\
\leq & \sum_{h=1}^t \mathbb{P} \left\{\frac{1}{h}\sum_{n=1}^h X(n)-{\mu}_i < -\sqrt{\frac{3\log{t}}{2h}}\right\} \\\label{eq:K1_Caseii}
\leq & \sum_{h=1}^t \frac{1}{t^3} =\frac{1}{t^2},
\end{aligned}
\end{equation}
where the last inequality is from the Chernoff-Hoeffding Bound (see, e.g., \cite{auer2002finite}).

Summing $\mathbb{E}[K_1(t)]$ for all $t \in \{0,\dots, T-1\}$ and applying \eqref{eq:K1_Casei} and \eqref{eq:K1_Caseii} yield
\begin{equation}
\begin{aligned}
\sum_{t=0}^{T-1} \mathbb{E}[K_1(t)] 
\leq \sum_{t=t^i_1+1}^{T-1} \frac{1}{t^2} \leq \sum_{t=1}^{\infty} \frac{1}{t^2} = \frac{\pi^2}{6}.
\label{eq:K1_expectation}
\end{aligned}
\end{equation}


Finally, Summing \eqref{eq:C3_K1} for all $t \in \{0,\dots, T-1\}$ and plugging \eqref{eq:K1_expectation} into it yield \eqref{eq:C3_sum}.

\end{document}